\newcommand\longversion[1]{#1}
\newcommand\shortversion[1]{}
\newcommand\aaaiversion[1]{}
\journal{Theoretical Computer Science}
\newcommand{\defproblem}[3]{
%   \vspace{1mm}
\begin{center}
\noindent\fbox{

  \begin{minipage}{\textwidth}
  \begin{tabular*}{\textwidth}{@{\extracolsep{\fill}}l} \textsc{\underline{#1}} \\ \end{tabular*}\vspace{1ex}
  {\bf{Input:}} #2  \\
  {\bf{Question:}} #3
  \end{minipage}
 
  }
\end{center}
%   \vspace{1mm}
}
\newcommand{\ignore}[1]{}
\titlespacing{\section}{0pt}{0pt}{0pt}
\titlespacing{\subsection}{0pt}{0pt}{0pt}
\titlespacing{\subsubsection}{0pt}{0pt}{0pt}
\definecolor{arylideyellow}{rgb}{0.91, 0.84, 0.42}
\newcommand{\nfrac}{\nicefrac}
\renewcommand{\ge}{\geqslant}
\renewcommand{\le}{\leqslant}
\newcommand{\pr}{\prime}
\newcommand{\suc}{\ensuremath{\succ}\xspace}
\newcommand{\YES}{\textsc{Yes}\xspace}
\newcommand{\NO}{\textsc{No}\xspace}
\newcommand{\PS}{\textsc{Permutation sum}\xspace}
\newcommand{\NP}{\ensuremath{\mathsf{NP}}\xspace}
\newcommand{\NPC}{\ensuremath{\mathsf{NP}}-complete\xspace}
\newcommand{\Pshort}{\ensuremath{\mathsf{P}}\xspace}
\newcommand{\NPshort}{\ensuremath{\mathsf{NP}}\xspace}
\newcommand{\CC}{\ensuremath{\mathcal C}\xspace}
\newcommand{\PP}{\ensuremath{\mathcal P}\xspace}
\newcommand{\VV}{\ensuremath{\mathcal V}\xspace}
\newcommand{\XX}{\ensuremath{\mathcal X}\xspace}
\newcommand{\xxx}{\ensuremath{\mathfrak x}\xspace}
\newcommand{\el}{\ensuremath{\ell}\xspace}
\newtheorem{proposition}{\bf Proposition}
\newtheorem{theorem}{\bf Theorem}
\newtheorem{lemma}{\bf Lemma}
\newtheorem{lemma*}{\bf Lemma$^{\star}$}
\newtheorem{corollary}{\bf Corollary}
\newtheorem{definition}{\bf Definition}
\newtheorem{observation}{\bf Observation}
\crefname{theorem}{theorem}{ Theorem}
\crefname{lemma}{lemma}{\bf Lemma}
\crefname{lemma*}{lemma}{\bf Lemma}
\crefname{corollary}{corollary}{\bf Corollary}
\crefname{proposition}{proposition}{\bf Proposition}
\crefname{definition}{definition}{\bf Definition}
\crefname{observation}{observation}{\bf Observation}
\crefname{table}{table}{\bf Table}
\title{Frugal Bribery in Voting}
\author{AAAI Submission Number: 1421}
\author{Palash Dey$^\star$, Neeldhara Misra$^\dagger$, and Y. Narahari$^\S$\\ \texttt{palash@csa.iisc.ernet.in, mail@neeldhara.com, hari@csa.iisc.ernet.in}}
\address{$^\star$Tata Institute of Fundamental Research, Mumbai\\
$^\dagger$Indian Institute of Technology, Gandhinagar\\
$^\S$Indian Institute of Science, Bangalore}
\begin{document}
\sloppy

% \maketitle

\begin{frontmatter}
\begin{abstract} Bribery in elections is an important problem in computational
social choice theory. We introduce and study two important special cases of the classical
\textsc{\$Bribery} problem, namely, \textsc{Frugal-bribery} and \textsc{Frugal-\$bribery} where the briber is frugal in nature. By this, we mean that the briber is 
only able to influence voters who benefit from the suggestion of the briber.
More formally, a voter is {\em vulnerable} if the outcome of the election  
improves according to her own preference when she accepts the suggestion of the 
briber. In the \textsc{Frugal-bribery} problem, the goal is to make a certain
candidate win the election by changing {\em only} the vulnerable votes. In the
\textsc{Frugal-\$bribery} problem, the vulnerable votes have prices and the
goal is to make a certain candidate win the election by changing only the 
vulnerable votes, subject to a budget constraint. We further
formulate two natural variants of the \textsc{Frugal-\$bribery} problem namely
\textsc{Uniform-frugal-\$bribery} and \textsc{Nonuniform-frugal-\$bribery}
where the prices of the vulnerable votes are, respectively, all the same or different. The \textsc{Frugal-bribery} problem turns out to be a special case of sophisticated {\sc \$Bribery} as well as {\sc Swap-bribery} problems. Whereas the \textsc{Frugal-\$bribery} problem turns out to be a special case of the {\sc \$Bribery} problem.

We show that the \textsc{Frugal-bribery} problem is polynomial time solvable for the $k$-approval, $k$-veto, and plurality with run off voting rules for unweighted elections. These results establish success in finding practically appealing as well as polynomial time solvable special cases of the sophisticated {\sc \$Bribery} and {\sc Swap-bribery} problems. On the other hand, we show that the \textsc{Frugal-bribery} problem is \NPC for the Borda voting rule and the \textsc{Frugal-\$bribery} problem is \NPC for most of the voting rules studied here barring the plurality and the veto voting rules for unweighted elections. Our hardness results of the \textsc{Frugal-bribery} and the \textsc{Frugal-\$bribery} problems thus subsumes and strengthens the hardness results of the {\sc \$Bribery} problem from the literature. For the weighted elections, we show that the \textsc{Frugal-bribery} problem is \NPC for all the voting rules studied here except the plurality voting rule even when the number of candidates is as low as $3$ (for the STV and the plurality with run off voting rules) or $4$ (for the maximin, the Copeland$^\alpha$ with $\alpha\in[0,1)$, and the simplified Bucklin voting rules). In our view, the fact that the simplest \textsc{Frugal-bribery} problem becomes computationally intractable for many important voting rules (except the plurality voting rule) even with very few candidates is surprising as well as interesting.

% We observe that, even if we have only a small number of
% candidates, the problems are intractable for all voting rules studied here for
% weighted elections, with the sole exception of the \textsc{Frugal-bribery}
% problem for the plurality voting rule. In contrast, we have polynomial time
% algorithms for the \textsc{Frugal-bribery} problem for plurality, veto,
% $k$-approval, $k$-veto, and plurality with runoff voting rules for unweighted
% elections. However, the \textsc{Frugal-\$bribery} problem is intractable for
% all the voting rules studied here barring the plurality and the veto voting
% rules for unweighted elections.
% These intractability results demonstrate that bribery is a hard computational 
% problem, in the sense that several special cases of this problem continue to be
% computationally intractable. 
% This strengthens the view that bribery, although a possible attack on an election in principle, may be infeasible in practice. 

\end{abstract}

\begin{keyword}
Computational social choice; voting; bribery; frugal; manipulation; algorithm; theory. 
\end{keyword}

\end{frontmatter}
\section{Introduction}
In a typical voting scenario, we have a set of candidates and a set of voters 
reporting their \emph{preferences or votes} which are complete rankings over the candidates. 
A \emph{voting rule} is a procedure that, given a collection of votes, chooses one candidate as the winner. A set of votes over a set of candidates along with a voting rule is called an election.

Activities that try to influence voter opinions, in favor of specific candidates, are very common during the time that an election is in progress. For example, in a political election, candidates often conduct elaborate campaigns to promote themselves among a general or targeted audience. Similarly, it is not uncommon for people to protest against, or rally for, a national committee or court that is in the process of approving a particular policy. An extreme illustration of this phenomenon is \emph{bribery} --- here, the candidates may create financial incentives to sway the voters. Of course, the process of influencing voters may involve costs even without the bribery aspect; for instance, a typical political campaign or rally entails considerable expenditure. 

All situations involving a systematic attempt to influence voters usually have the following aspects: an external agent, a candidate that the agent would like to be the winner, a budget constraint, a cost model for a change of vote, and knowledge of the existing election. The formal computational problem that arises from these inputs is the following: is it possible to make a distinguished candidate win the election in question by incurring a cost that is within the budget? This question, with origins in Faliszewski et al.~\cite{faliszewski2006complexity,faliszewski2009hard,faliszewski2009llull}, has been subsequently studied intensely in computational social choice literature\shortversion{~\cite{faliszewski2008nonuniform,elkind2009swap,baumeister2012campaigns,pini2013bribery,dorn2014hardness,mattei2012bribery,MatteiPRV13,erdelyi2009complexity,erdelyi2014bribery,faliszewski2014complexity,xia2012computing,dorn2012multivariate,bredereck2014prices,FaliszewskiHH11,SchlotterFE11}}. 
In particular, bribery has been studied under various cost models, for example, uniform price per 
vote which is known as \textsc{\$Bribery}~\cite{faliszewski2006complexity}, nonuniform price per vote~\cite{faliszewski2008nonuniform},  nonuniform price per shift of the distinguished candidate per vote which is called \textsc{Shift bribery}, nonuniform price per swap of candidates 
per vote which is called \textsc{Swap bribery}~\cite{elkind2009swap}. A closely related problem known as campaigning has been studied for various vote models, for example, truncated ballots~\cite{baumeister2012campaigns}, 
soft constraints~\cite{pini2013bribery}, CP-nets~\cite{dorn2014hardness}, combinatorial domains~\cite{mattei2012bribery} and 
probabilistic lobbying~\cite{erdelyi2009complexity}. 
The bribery problem has also been studied under 
voting rule uncertainty~\cite{erdelyi2014bribery}. Faliszewski et al.~\cite{faliszewski2014complexity} 
study the complexity of bribery in simplified Bucklin and Fallback voting rules. Xia~\cite{xia2012computing} studies destructive bribery, 
where the goal of the briber is to change the winner by changing minimum number of votes. 
Dorn et al.~\cite{dorn2012multivariate} studies the 
parameterized complexity of the \textsc{Swap Bribery} problem and Bredereck et al.~\cite{bredereck2014prices} explores the parameterized 
complexity of the \textsc{Shift Bribery} problem for a wide range of parameters.
We recall again that the costs and the budgets involved in all the bribery problems above need not necessarily 
correspond to actual money traded between voters and candidates. They may correspond to any cost in general, for example, the amount of effort or time that the briber needs to spend for each voter.

\subsection{Motivation}
In this work, we propose an effective cost model for the bribery problem. Even the most general cost models that have been studied in the literature fix absolute costs per voter-candidate combination, with no specific consideration to the voters' opinions about the current winner and the distinguished candidate whom the briber wants to be the winner. In our proposed model, a change of vote is relatively easier to effect if the change causes an outcome that the voter would find desirable. Indeed, if the currently winning candidate is, say, $a$, and a voter is (truthfully) promised that by changing her vote from $c \succ d \succ a \succ b$ to $d \succ b \succ c \succ a$, the winner of the election would change from $a$ to $d$, then this is a change that the voter is likely to be  happy to make. While the change does not make her most favorite candidate win the election, it does improve the result from her point of view. Thus, given the circumstances (namely that of her least favorite candidate winning the election), the altered vote serves the voter better than the original one. 

We believe this perspective of voter influence is an important one to study. The cost of a change of vote is proportional to the nature of the outcome that the change promises --- the cost is low or nil if the change results in a better outcome with respect to the voter's original ranking, and high or infinity otherwise. A frugal agent only approaches voters of the former category, thus being able to effectively bribe with minimal or no cost. Indeed the behavior of agents in real life is often frugal. For example, consider campaigners in favor of a relatively smaller party in a political election. They may actually target only vulnerable voters due to lack of human and other resources they have at their disposal.

More formally, let $c$ be the winner of an election and $p$ (other than $c$) the candidate whom the briber 
wishes to make the winner of the election. Now the voters who prefer $c$ to $p$ will be reluctant to change their votes, and we call these votes {\em non-vulnerable with respect to $p$} --- we do not allow these votes to be changed by the briber, which justifies the {\em frugal} nature of the briber. On the other hand, if a voter prefers $p$ to $c$, then it may be very easy to convince her to change her vote if doing so makes $p$ win the election. We name these votes {\em vulnerable with respect to $p$}. When the candidate $p$ is clear from the context, we simply call these votes non-vulnerable and vulnerable, respectively. 

The computational problem is to determine whether there is a way to make a candidate $p$ win the election by changing \emph{only} those votes that are vulnerable with respect to $p$. We call this problem \textsc{Frugal-bribery}. Note that there is no cost involved in the \textsc{Frugal-bribery} problem --- the briber does not incur any cost to change the votes of the vulnerable votes. We also extend this basic model to a more general setting where each vulnerable vote has a certain nonnegative integer price which may correspond to the effort involved in approaching these voters and convincing them to change their votes. We also allow for the specification of a budget constraint, which can be used to enforce auxiliary constraints. This leads us to define the \textsc{Frugal-\$bribery} problem, where we are required to find a subset of vulnerable votes with a total cost that is within a given budget, such that these votes can be changed in some way to make the candidate $p$ win the election. Note that the \textsc{Frugal-\$bribery} problem can be either uniform or nonuniform depending on whether the prices of the vulnerable votes are all identical or different. If not mentioned otherwise, the prices of the vulnerable votes will be assumed to be nonuniform. 
We remind that the briber is not allowed to change the non-vulnerable votes in both the \textsc{Frugal-bribery} and the \textsc{Frugal-\$bribery} problems.

\subsection{Contributions}

Our primary contribution in this paper is to formulate and study two important and natural models of bribery which turn out to be special cases of the well studied \textsc{\$Bribery} problem in elections. Indeed, the \textsc{Frugal-\$bribery} problem and, more importantly, the \textsc{Frugal-bribery} problem are very restricted yet practically appealing cases of the \textsc{\$Bribery} problem.

% Our results show that both the \textsc{Frugal-bribery} and the \textsc{Frugal-\$bribery} problems are intractable for many commonly used voting rules for weighted as well as unweighted elections, barring a few exceptions. These intractability results can be interpreted as an evidence that the bribery in elections is a hard computational problem in the sense that even many of its important and natural special cases continue to be intractable.  From a more theoretical perspective, our intractability results show that considerably more restricted versions of the \textsc{\$Bribery} problem, namely the \textsc{Frugal-bribery} and the \textsc{Frugal-\$bribery} problems, continue to be intractable. On the other hand, our polynomial time algorithms exhibit interesting tractable special cases of the \textsc{\$Bribery} problem.

\subsubsection*{Our Results for Unweighted Elections} 

We have the following polynomial time algorithms for unweighted elections. These results show that the \textsc{Frugal-bribery} problem is computationally tractable for some voting rules for which the \textsc{\$Bribery} problem is \NPC as observed for the $k$-approval with $k\ge 3$~\cite{Lin11}, simplified Bucklin~\cite{FaliszewskiRRS15}. We summarize the results in \Cref{tbl:frugal_summary}.

 \begin{itemize}
  \item The \textsc{Frugal-bribery} problem is in \Pshort{} for the $k$-approval, simplified Bucklin, and plurality with runoff voting rules. Also, the \textsc{Frugal-\$bribery} problem is in \Pshort{} for the plurality and veto voting rules. 
  \item The \textsc{Frugal-\$bribery} problem is in \Pshort{} for the $k$-approval, simplified Bucklin, and plurality with runoff voting rules when the budget is a constant [\Cref{thm:frugalKappP}].
 \end{itemize}
 
 We have the following intractability results for the \textsc{Frugal-bribery} problem and the \textsc{Frugal-\$bribery} problem for unweighted elections. Our hardness results of the \textsc{Frugal-bribery} and the \textsc{Frugal-\$bribery} problems below thus subsume and strengthen the hardness results of the {\sc \$Bribery} problem from the literature.
 
 \begin{itemize}  
  \item The \textsc{Frugal-bribery} problem is \NPC{} for the Borda voting rule [\Cref{thm:frugalBordaNPC}]. The \textsc{Frugal-\$bribery} is \NPC{} for the $k$-approval for any constant $k \ge 5$ [\Cref{thm:frugalKappNPC}], $k$-veto for any constant $k \ge 3$ [\Cref{thm:frugalKvetoNPC}], and a wide class of scoring rules [\Cref{thm:frugalScrNPC}] even if the price of every vulnerable vote is either $1$ or $\infty$. Moreover, the \textsc{Uniform-frugal-\$bribery} is \NPC{} for the Borda voting rule even if all the vulnerable votes have a uniform
  price of $1$ and the budget is $2$ [\Cref{thm:uniformBordaNPC}].
  
  \item The \textsc{Frugal-\$bribery} problem is \NPshort{}-complete for the Borda, maximin, Copeland, and STV voting rules [\Cref{lem:frugalNPC}].
 \end{itemize}
 
\subsubsection*{Our Results for Weighted Elections}

We have the following results for weighted elections. We observe that, barring a few exceptions, even the most restrictive \textsc{Frugal-bribery} problem is \NPC even when we have only $3$ or $4$ candidates.

\begin{itemize}
 \item The \textsc{Frugal-bribery} problem is in \Pshort{} for the maximin and Copeland voting rules when we have only $3$ candidates [\Cref{lem:wfrugalEasy}], and for the plurality voting rule for any number of candidates [\Cref{thm:wfrugalP}].
 \item The \textsc{Frugal-bribery} problem is \NPC{} for the STV [\Cref{thm:stv_wt}], plurality with runoff [\Cref{cor:run_wt}], and every scoring rule except the plurality voting rule [\Cref{lem:wfrugalScr}] for $3$ candidates. The \textsc{Frugal-\$bribery} problem is \NPC{} for the plurality voting rule for $3$ candidates [\Cref{thm:wfrugalPluNPC}]. 
 \item When we have only $4$ candidates, the \textsc{Frugal-bribery} problem is \NPC{} for the maximin [\Cref{thm:wfrugalMaxmin}], simplified Bucklin [\Cref{thm:bucklin_wt}], and Copeland [\Cref{thm:wfrugalCopeland}] rules.
\end{itemize}

\begin{table}[htbp]\centering
%   \begin{center}
  \resizebox{\linewidth}{!}{
\renewcommand*{\arraystretch}{2}
 \begin{tabular}{|c|c|c|c|c|}\hline 
  \multirow{2}{*}{Voting Rules}			& \multicolumn{2}{c|}{Unweighted}	&\multicolumn{2}{c|}{Weighted}	\\\cline{2-5}
						& \textsc{Frugal-bribery} & \textsc{Frugal-\$bribery}& \textsc{Frugal-bribery} & \textsc{Frugal-\$bribery}\\\hline\hline
  Plurality & \makecell{\Pshort{}\\\relax[\Cref{lem:frugalP}]} & \makecell{\Pshort{}\\\relax[\Cref{lem:frugalPluP}]} & \makecell{\Pshort{}\\\relax[\Cref{thm:wfrugalP}]} & \makecell{\NPC{}\\\relax[\Cref{thm:wfrugalPluNPC}]}\\\hline
  
  Veto & \makecell{\Pshort{}\\\relax[\Cref{lem:frugalP}]} & \makecell{\Pshort{}\\\relax[\Cref{lem:frugalPluP}]} & \makecell{\NPC{}\\\relax[\Cref{lem:wfrugalScr}]} & \makecell{\NPC{}\\\relax[\Cref{lem:wfrugalScr}]}\\\hline
  
  $k$-approval & \makecell{\Pshort{}\\\relax[\Cref{lem:frugalP}]} & \makecell{\NPC{}$^{\star}$\\\relax[\Cref{thm:frugalKappNPC}]} & \makecell{\NPC{}$^{\diamond}$\\\relax[\Cref{lem:wfrugalScr}]} & \makecell{\NPC{}\\\relax[\Cref{lem:wfrugalScr}]}\\\hline
  
  $k$-veto & \makecell{\Pshort{}\\\relax[\Cref{lem:frugalP}]} & \makecell{\NPC{}$^{\star}$\\\relax[\Cref{thm:frugalKvetoNPC}]} & \makecell{\NPC{}$^{\diamond}$\\\relax[\Cref{lem:wfrugalScr}]} & \makecell{\NPC{}\\\relax[\Cref{lem:wfrugalScr}]}\\\hline
  
  Borda & \makecell{\NPC{}\\\relax[\Cref{thm:frugalBordaNPC}]} & \makecell{\NPC{}$^{\dagger}$\\\relax[\Cref{thm:frugalScrNPC}]} & \makecell{\NPC{}\\\relax[\Cref{lem:wfrugalScr}]} & \makecell{\NPC{}\\\relax[\Cref{lem:wfrugalScr}]}\\\hline
  
  Runoff & \makecell{\Pshort{}\\\relax[\Cref{lem:frugalP}]} & ? & \makecell{\NPC{}\\\relax[\Cref{cor:run_wt}]} & \makecell{\NPC{}\\\relax[\Cref{cor:run_wt}]}\\\hline
  
  Maximin & ? & \makecell{\NPC{}\\\relax[\Cref{lem:frugalNPC}]} & \makecell{\NPC{}\\\relax[\Cref{thm:wfrugalMaxmin}]} & \makecell{\NPC{}\\\relax[\Cref{thm:wfrugalMaxmin}]}\\\hline
  
  Copeland & ? & \makecell{\NPC{}\\\relax[\Cref{lem:frugalNPC}]} & \makecell{\NPC{}\\\relax[\Cref{thm:wfrugalCopeland}]} & \makecell{\NPC{}\\\relax[\Cref{thm:wfrugalCopeland}]}\\\hline
  
  STV & ? & \makecell{\NPC{}\\\relax[\Cref{lem:frugalNPC}]} & \makecell{\NPC{}\\\relax[\Cref{thm:stv_wt}]} & \makecell{\NPC{}\\\relax[\Cref{thm:stv_wt}]}\\\hline
\end{tabular}}
  \caption{${\star}$- The result holds for $k\ge3$. $\dagger$- The result holds for a much wider class of scoring rules. ${\diamond}$- The results do not hold for the plurality voting rule. ?- The problem is open.}\label{tbl:frugal_summary}
%   \end{center}
\end{table}

\subsection{Related Work} The pioneering work of Faliszewski et al.~\cite{faliszewski2006complexity} defined and studied the \textsc{\$Bribery} problem wherein, the input is a set of votes with prices for each vote and the goal is to 
make some distinguished candidate win the election, subject to a budget constraint of the briber. The \textsc{Frugal-\$bribery} problem is the \textsc{\$Bribery} problem with the restriction that the price of every non-vulnerable vote is infinite. Also, the \textsc{Frugal-bribery} problem is a special case of the \textsc{Frugal-\$bribery} problem. Hence, whenever the \textsc{\$Bribery} problem is computationally easy in a setting, both the \textsc{Frugal-bribery} and the \textsc{Frugal-\$bribery} problems are also computationally easy (see \Cref{prop:conn} for a more formal proof). 
However, the \textsc{\$Bribery} problem is computationally intractable in most of the settings. This makes the study of important special cases such as \textsc{Frugal-bribery} and \textsc{Frugal-\$bribery}, interesting. Elkind et al.\cite{ElkindFS09} introduced and studied the {\sc Swap-bribery} problem where we have a more sophisticated cost model specifying cost of swapping every pair of candidates for every vote. It turns out that the \textsc{Frugal-bribery} problem is a special case of the {\sc Swap-bribery} problem (see \Cref{prop:swap} for a formal proof). We note that a notion similar to vulnerable votes has been studied in the context of dominating manipulation by Conitzer et al. \cite{ConitzerWX11}. Hazon et al.~\cite{hazon2013change} introduced and studied \textsc{Persuasion} and $k$-\textsc{Persuasion} problems for the plurality, veto, $k$-approval, Bucklin, and Borda voting rules in unweighted elections only. In the \textsc{Persuasion} and $k$-\textsc{Persuasion} problems an external agent suggests votes to vulnerable voters which are beneficial for the vulnerable voters as well as the external agent. It turns out that the \textsc{Persuasion} and the $k$-\textsc{Persuasion} problems Turing reduce to the \textsc{Frugal-bribery} and the \textsc{Frugal-\$bribery} problems respectively (see \Cref{prop:persu}). Therefore, the polynomial time algorithms we propose in this work imply polynomial time algorithms for the persuasion analog. On the other hand, since the reduction in~\Cref{prop:persu} from \textsc{Persuasion} to \textsc{Frugal-bribery} is a Turing reduction, the existing \NP{}-completeness results for the persuasion problems do not imply \NP{}-completeness results for the corresponding frugal bribery variants. We refer to the book by Rogers~\cite{rogers1967theory} for Turing reductions.
% We study the computational complexity of these problems for both weighted and unweighted elections and show that 
% these problems are mostly computationally intractable (see Table 1)}. We believe that these problems are important as well as natural specializations of the \textsc{\$Bribery} problem and their study has both theoretical and practical impact. 

\paragraph*{Organization} The rest of the paper is organized as follows. We first establish the setup and general notions in \Cref{sec:prelim}. Next we present our results for unweighted elections in \Cref{sec:unwt} and our results for the weighted elections \Cref{sec:wt}. Finally we conclude in \Cref{sec:con}. A Preliminary version of this work appeared at AAAI-16~\cite{Deyfrugal}.
\section{Preliminaries}\label{sec:prelim}

Let $\mathcal{V}=\{v_1, \dots, v_n\}$ be the set of all \emph{voters} and $\mathcal{C}=\{c_1, \dots, c_m\}$ 
the set of all \emph{candidates}. 
Each voter $v_i$'s \textit{vote} is a \emph{preference} $\succ_i$ over the 
candidates which is a linear order over $\mathcal{C}$. For example, for two candidates $a$ and $b$, $a \succ_i b$ means that the voter $v_i$ prefers $a$ to $b$. Let $\suc$ be a vote over \CC, $x\in\CC$ be a candidate, and $k$ be a positive integer. We say that $x$ is placed at the $k^{th}$ position in the vote $\suc$ if there are exactly $k-1$ candidates in $\CC\setminus\{x\}$ who are preferred over $x$ in \suc; that is, $|\{y\in\CC\setminus\{x\}:y\suc x\}|=k-1$. We say that $x$ is placed at the top or the first position of $\suc$ if $x$ is preferred over every other candidate $y\in\CC\setminus\{x\}$. We say that $x$ is placed at the bottom or the last position of \suc if every candidate $y\in\CC\setminus\{x\}$ other than $x$ is preferred over $x$ in \suc. We say that $x$ is placed at the $k^{th}$ position from the bottom or the last position if $x$ is preferred over exactly $k-1$ candidates $y\in\CC\setminus\{x\}$ other than $x$; that is, $|\{y\in\CC\setminus\{x\}: x\suc y\}=k-1|$. In this paper, whenever we do not specify the order among a set of candidates while describing a vote, the statement/proof is correct in whichever way we fix the order among them. We denote the set $\{0,1, 2, \ldots\}$ by $\mathbb{N}$, $\mathbb{N}\setminus\{0\}$ by $\mathbb{N}^+$, and $\{1, \ldots, k\}$ by $[k]$, for any positive integer $k$. We denote the set of all linear orders over $\mathcal{C}$ by $\mathcal{L(C)}$. Hence, $\mathcal{L(C)}^n$ denotes the set of all $n$-voters' preference profiles $\succ_{[n]}=(\succ_1, \dots, \succ_n)$. Let $\uplus$ denote the disjoint union of sets. A map $r_c:\uplus_{n,|\mathcal{C}|\in\mathbb{N}^+}\mathcal{L(C)}^n\longrightarrow 2^\mathcal{C}\setminus\{\emptyset\}$ is called a \emph{voting correspondence}. A map $t:2^\mathcal{C}\setminus\{\emptyset\}\longrightarrow \mathcal{C}$ is called a \emph{tie breaking rule}. A commonly used tie breaking rule is the \emph{lexicographic} tie breaking rule where ties are broken according to a predetermined preference $\succ_t \in \mathcal{L(C)}$. A \emph{voting rule} is $r=t\circ r_c$, where $\circ$ denotes the composition of maps. 

\paragraph*{Remark} We note that, in the literature, the definition of a voting rule is usually defined as what we are referring to as a voting correspondence in the above. In particular, the tie-breaking rule is often left out from the definition. We choose to only deal with voting rules that lead to unique winners by definition, because of our notion of vulnerable votes. However, the notion of vulnerable votes can be generalized in natural ways (say, for instance, that a vote is vulnerable if it prefers the desired candidate over all the current winners; or at least one of them). As long as we require $p$ to be the unique winner of the bribed profile, our proofs will carry over to the more general setting. We use tie-breaking rules mostly for ease of presentation. 

%For this reason, we include a tie breaking rule as part of the definition of voting rules and mention a tie breaking rule in all our proofs. 

In many settings, the voters may have positive integer weights. Such an election is called a weighted election. The winner of a weighted election is defined to be the winner of the unweighted election where each vote is replaced by as many copies of the vote as its weight. We remark that for all the voting rules studied here, the winner of any weighted election can be computed in polynomial amount of time. We assume the elections to be unweighted, if not stated otherwise. Given an election $E$, we can construct a directed weighted graph $G_E$, called the \textit{weighted majority graph}, from $E$. The set of vertices in $G_E$ is the set of candidates in $E$. For any two candidates $x$ and $y$, the weight of the edge $(x,y)$ is $D_E(x,y) = N_E(x,y) - N_E(y,x)$, where $N_E(a, b)$ is the number of voters who prefer candidate $a$ to $b$. A candidate $x$ is called the {\em Condorcet winner} in an election $E$ if $D_E(x,y) > 0$ for every other candidate $y \ne x$. A voting rule is called {\em Condorcet consistent} if it selects the Condorcet winner 
as the winner of the election whenever it exists.
Some examples of common voting correspondences are as follows.

\begin{itemize}
 \item \textbf{Positional scoring rules:} A collection of $m$-dimensional vectors $\overrightarrow{s_m}=\left(\alpha_1,\alpha_2,\dots,\alpha_m\right)\in\mathbb{R}^m$ 
 with $\alpha_1\ge\alpha_2\ge\dots\ge\alpha_m$ and $\alpha_1>\alpha_m$ for every $m\in \mathbb{N}$ naturally defines a
 voting rule --- a candidate gets score $\alpha_i$ from a vote if it is placed at the $i^{th}$ position, and the 
 score of a candidate is the sum of the scores it receives from all the votes. 
 The winners are the candidates with maximum score. Scoring rules remain unchanged if we multiply every $\alpha_i$ by any constant $\lambda>0$ and/or add any constant $\mu$. Hence, we assume without loss of generality that for any score vector $\overrightarrow{s_m}$, there exists a $j$ such that $\alpha_j - \alpha_{j+1}=1$ and $\alpha_k = 0$ for all $k>j$. We call such a $\overrightarrow{s_m}$ a normalized score vector. 
 
 If $\alpha_i$ is $1$ for $i\in [k]$ and $0$ otherwise, then, we get the {\em $k$-approval} voting rule. For the {\em $k$-veto} voting rule, $\alpha_i$ is $0$ for $i\in [m-k]$ and $-1$ otherwise. $1$-approval is called the {\em plurality} voting rule and $1$-veto is called the {\em veto} voting rule. If $\alpha_i=m-i$ for every $i\in[m]$, then we get the {\em Borda} voting rule.
 
 \item \textbf{Maximin:} The maximin score of a candidate $x$ is $\min_{y\ne x} D_E(x,y)$. The winners are the candidates with maximum maximin score.
 
 \item \textbf{Copeland$^{\alpha}$:} Given $\alpha\in[0,1]$, the Copeland$^{\alpha}$ score of a candidate $x$ is $|\{y\ne x:D_E(x,y)>0\}|+\alpha|\{y\ne x:D_E(x,y)=0\}|$. The winners are the candidates with maximum Copeland$^{\alpha}$ score. If not mentioned otherwise, we will assume $\alpha$ to be zero.
 
 \item \textbf{Simplified Bucklin:} A candidate $x$'s simplified Bucklin score is the minimum number $\ell$ such that more than half 
 of the voters rank $x$ in their top $\ell$ positions. The winners are the candidates with lowest simplified Bucklin score.
 
 \item \textbf{Plurality with runoff:} The top two candidates according to the plurality scores are selected first. The pairwise winner of these two candidates is selected as the winner. This rule is often called the runoff voting rule.
 
 \item \textbf{Single Transferable Vote:} In Single Transferable Vote (STV), 
 a candidate with the least plurality score is dropped from the election and its votes 
 are transferred to the next preferred candidate. If two or more candidates receive the least plurality score,  then a tie breaking rule is used. The candidate that remains after $(m-1)$ rounds is the winner.
\end{itemize}

Among the above voting correspondences along with any arbitrary lexicographic tie-breaking rule, only the maximin and the Copeland voting rules are Condorcet consistent.

We use the notation $A\le_\Pshort{}B$ to denote that the problem $A$ polynomial time many-to-one reduces to the problem $B$.

\subsection{Problem Definition}

In all the definitions below, $r$ is a fixed voting rule. We define the notion of vulnerable votes as follows.
Intuitively, the vulnerable votes are those votes whose voters can easily be persuaded to change their votes since doing so will result in an outcome that those voters prefer over the current one.

\begin{definition}(Vulnerable votes)\\
 Given a voting rule $r$, a set of candidates $\CC$, a profile of votes $\succ = (\succ_1, \ldots, \succ_n)$, and a distinguished candidate $p$, we say a vote $\succ_i$ is $p$-vulnerable if $p\succ_i r(\succ)$.
\end{definition} 

Recall that, whenever the distinguished candidate is clear from the context, we drop it from the notation. With the above definition of vulnerable votes, we formally define the \textsc{Frugal-bribery} problem as follows. Intuitively, the problem is to determine whether a particular candidate can be made winner by changing only the vulnerable votes.

\defproblem{$r$-\textsc{Frugal-bribery}}{A set \CC of candidates, a preference profile $\succ = (\succ_1, \ldots, \succ_n)$ over \CC, and a candidate $p$.}{Is there a way to make $p$ win the election according to the voting rule $r$ by changing only the $p$-vulnerable votes?}

% \begin{definition}($r$-\textsc{Frugal-bribery})\\
%  Given a preference profile $\succ = (\succ_1, \ldots, \succ_n)$ over a candidate set $\CC$, and a candidate $p$, determine
% if there is a way to make $p$ win the election by changing only the $p$-vulnerable votes. We denote an arbitrary instance of $r$-\textsc{Frugal-bribery} by $(\CC, \succ, p)$.
% \end{definition}

We denote an arbitrary instance of $r$-\textsc{Frugal-bribery} by $(\CC, \succ, p)$. Next we generalize the \textsc{Frugal-bribery} problem to the \textsc{Frugal-\$bribery} problem which involves prices for the 
vulnerable votes and a budget for the briber. Intuitively, the price of a vulnerable vote $v$ is the cost the briber incurs to change the vote $v$.

\defproblem{$r$-\textsc{Frugal-\$bribery}}{A set \CC of candidates, a preference profile $\succ = (\succ_1, \ldots, \succ_n)$ over \CC, a candidate $p$, a finite budget $b\in\mathbb{N}$, and a price function $c:[n]\longrightarrow \mathbb{N}\cup\{\infty\}$ such that $c(i)  = \infty$ if $\succ_i$ is not a $p$-vulnerable vote.}{Do there exist $p$-vulnerable votes $\succ_{i_1}, \ldots, \succ_{i_\ell}\in\succ$ and votes $\succ_{i_1}^\prime, \ldots, \succ_{i_\ell}^\prime\in\mathcal{L}(C)$ such that:
\begin{enumerate}
	\item[(a)] the total cost of the chosen votes is within the budget, that is, $\sum_{j=1}^\ell c(i_j) \le b$, and
	\item[(b)] the new votes make the desired candidate win according to the voting rule $r$, that is, $r(\succ_{[n]\setminus\{i_1, \ldots, i_\ell\}}, \succ_{i_1}^\prime, \ldots, \succ_{i_\ell}^\prime) = p$. 
\end{enumerate}}

% \begin{definition}($r$-\textsc{Frugal-\$bribery})\\
% Let $\succ = (\succ_1, \ldots, \succ_n)$ be a preference profile over a candidate set $\CC$. We are given a candidate $p$, a finite budget $b\in\mathbb{N}$, and a price function $c:[n]\longrightarrow \mathbb{N}\cup\{\infty\}$ such that $c(i)  = \infty$ if $\succ_i$ is not a $p$-vulnerable vote. The goal is to determine if there exist $p$-vulnerable votes $\succ_{i_1}, \ldots, \succ_{i_\ell}\in\succ$ and votes $\succ_{i_1}^\prime, \ldots, \succ_{i_\ell}^\prime\in\mathcal{L}(C)$ such that:
% 
% \begin{enumerate}
% 	\item[(a)] the total cost of the chosen votes is within the budget, that is, $\sum_{j=1}^\ell c(i_j) \le b$, and
% 	\item[(b)] the new votes make the desired candidate win, that is, $r(\succ_{[n]\setminus\{i_1, \ldots, i_\ell\}}, \succ_{i_1}^\prime, \ldots, \succ_{i_\ell}^\prime) = p$. 
% \end{enumerate}  
% 
% The special case of the problem when the prices of all the vulnerable votes are the same is called \textsc{Uniform-frugal-\$bribery}. We refer to the general version as \textsc{Nonuniform-frugal-\$bribery}. If not specified, \textsc{Frugal-\$bribery} refers to the nonuniform version. We denote an arbitrary instance of $r$-\textsc{Frugal-\$bribery} by $(\CC, \succ, p, c(\cdot))$.
% \end{definition}

The special case of the problem when the prices of all the vulnerable votes are the same is called \textsc{Uniform-frugal-\$bribery}. We refer to the general version as \textsc{Nonuniform-frugal-\$bribery}. If not specified, \textsc{Frugal-\$bribery} refers to the nonuniform version. We denote an arbitrary instance of $r$-\textsc{Frugal-\$bribery} by $(\CC, \succ, p, c(\cdot))$. The above problems are important special cases of the well studied \textsc{\$Bribery} problem. Also, the \textsc{Coalitional-manipulation} problem~\cite{bartholdi1989computational,conitzer2007elections}, one of the classic problems in computational social choice theory, turns out to be a special case of the \textsc{Frugal-\$bribery} problem [see \Cref{prop:conn}]. For the sake of completeness, we include the definitions of these problems here.

\defproblem{$r$-\textsc{\$Bribery}~\cite{faliszewski2009hard}}{A set \CC of candidates, a preference profile $\succ = (\succ_1, \ldots, \succ_n)$ over \CC, a candidate $p$, a price function $c:[n]\longrightarrow \mathbb{N}\cup\{\infty\}$, and a budget $b\in\mathbb{N}$.}{Do there exist votes $\succ_{i_1}, \ldots, \succ_{i_\ell}\in\succ$ and votes $\succ_{i_1}^\prime, \ldots, \succ_{i_\ell}^\prime\in\mathcal{L}(C)$ such that:
\begin{enumerate}
	\item[(a)] the total cost of the chosen votes is within the budget, that is, $\sum_{j=1}^\ell c(i_j) \le b$, and
	\item[(b)] the new votes make the desired candidate win according to the voting rule $r$, that is, $r(\succ_{[n]\setminus\{i_1, \ldots, i_\ell\}}, \succ_{i_1}^\prime, \ldots, \succ_{i_\ell}^\prime) = p$. 
\end{enumerate}}

% \begin{definition}($r$-\textsc{\$Bribery})~\cite{faliszewski2009hard}\\
%  Given a preference profile $\succ = (\succ_1, \ldots, \succ_n)$ over a set of candidates \CC, a distinguished candidate $p$, a price function $c:[n]\longrightarrow \mathbb{N}\cup\{\infty\}$, and a budget $b\in\mathbb{N}$, determine if there a way to make $p$ win the election.
% \end{definition}

\defproblem{$r$-\textsc{Coalitional-manipulation}~\cite{bartholdi1989computational,conitzer2007elections}}{A set \CC of candidates, a preference profile $\succ^t = (\succ_1, \ldots, \succ_n)$ of truthful voters over \CC, an integer $\el$ encoded in unary, and a distinguished candidate $p$.}{Does there exist an $\el$ voter preference profile $\succ^\el$ such that the candidate $p$ wins uniquely (does not tie with any other candidate) in the profile $(\succ^t,\succ^\el)$ according to the voting rule $r$?}

% \begin{definition}(\textsc{Coalitional-manipulation})~\cite{bartholdi1989computational,conitzer2007elections}\\
%  Given a preference profile $\succ^t = (\succ_1, \ldots, \succ_n)$ of truthful voters over a set of candidates \CC, an integer $\el$ encoded in unary, and a distinguished candidate $p$, determine if there exists an $\el$ voter preference profile $\succ^\el$ such that the candidate $p$ wins uniquely (does not tie with any other candidate) in the profile $(\succ^t,\succ^\el)$.
% \end{definition}

The following proposition shows relations among the above problems.
\Cref{prop:conn,prop:man_bri_con,prop:persu} below hold for both weighted and unweighted elections.

\begin{proposition}\label{prop:conn}
 For every voting rule, \textsc{Frugal-bribery} $\le_\Pshort{}$ \textsc{Uniform-frugal-\$bribery} $\le_\Pshort{}$ \textsc{Nonuniform-frugal-\$bribery} 
 $\le_\Pshort{}$ \textsc{\$Bribery}. Also, \textsc{Coalitional-manipulation} $\le_\Pshort{}$ \textsc{Nonuniform-frugal-\$bribery}.
\end{proposition}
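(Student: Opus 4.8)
The plan is to establish each reduction in the chain separately, where each reduction is in fact the identity map on instances (or a trivially augmented one), so that the ``reductions'' amount to observing that the class of instances of the left-hand problem sits inside the class of instances of the right-hand problem. First I would handle \textsc{Frugal-bribery} $\le_\Pshort{}$ \textsc{uniform-Frugal-\$bribery}: given a \textsc{Frugal-bribery} instance $(C,\succ,p)$, I map it to the uniform-price instance in which every vulnerable vote has price $0$ and the budget is $b=0$ (or, equivalently, price $1$ and budget $n$). Since there is no money to worry about, the briber may change any subset of the vulnerable votes, so the two instances have identical sets of valid solutions, giving a \YES-instance iff the original is a \YES-instance. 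This map is clearly computable in polynomial time.

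Next I would handle \textsc{uniform-Frugal-\$bribery} $\le_\Pshort{}$ \textsc{nonuniform-Frugal-\$bribery}: a uniform-price instance is literally a special case of a nonuniform-price instance (one in which the price function happens to be constant on the vulnerable votes), so the identity map suffices. Similarly, for \textsc{nonuniform-Frugal-\$bribery} $\le_\Pshort{}$ \textsc{\$Bribery}, I would use the observation already noted in the Related Work section: a \textsc{Frugal-\$bribery} instance is exactly a \textsc{\$Bribery} instance in which the price of every non-vulnerable vote is set to $+\infty$ (or, since prices must be natural numbers, to a prohibitively large value such as $b+1$). With non-vulnerable votes priced beyond the budget, any within-budget bribery in the \textsc{\$Bribery} instance touches only vulnerable votes, so the two problems agree on these instances; computing the winner $r(\succ)$ to identify the vulnerable votes is polynomial-time, so the reduction is polynomial.

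For the final claim, \textsc{Coalitional-Manipulation} $\le_\Pshort{}$ \textsc{nonuniform-Frugal-\$bribery}, the construction is slightly more involved. Given a manipulation instance $(C,\succ^t,l,p)$ with $n$ truthful votes, I would build a frugal instance by appending $l$ additional ``manipulator'' votes that all rank $p$ at the top (so that each of them is automatically $p$-vulnerable regardless of how we complete the ranking, since $p$ is strictly preferred to whoever currently wins), assigning price $1$ to these $l$ appended votes and price $b+1 = l+1$ (effectively $+\infty$) to each of the $n$ truthful votes, and setting the budget to $b=l$. The briber can then afford to change exactly the $l$ manipulator votes and none of the truthful ones, so making $p$ win by frugal bribery in the constructed instance is equivalent to finding $l$ manipulator ballots that make $p$ win in the combined profile. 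The main obstacle to watch is the tie-breaking subtlety: \textsc{Coalitional-Manipulation} as defined requires $p$ to win \emph{uniquely}, whereas \textsc{Frugal-bribery} is phrased via a voting rule $r = t \circ r_c$ that already incorporates a tie-breaking rule $t$ and hence always returns a single winner. To make the reduction faithful I would fix the lexicographic tie-breaking rule $\succ_t$ so that $p$ is \emph{least} preferred, ensuring that $r$ declares $p$ the winner precisely when $p$ is a strict (unique) winner of the underlying correspondence; I would also need to check that the appended manipulator votes are genuinely classified as vulnerable, which follows because before bribery $p$ does not win (otherwise the instance is trivial), so each manipulator ranks $p$ above the current winner. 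Verifying these edge cases, rather than the structure of the reduction itself, is where the care is required.
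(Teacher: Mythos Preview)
Your proposal is correct and follows essentially the same route as the paper. The first three reductions are handled identically (zero prices and zero budget for the first; specialization/identity for the next two). For \textsc{Coalitional-Manipulation} $\le_\Pshort{}$ \textsc{nonuniform-Frugal-\$bribery}, the paper also appends $l$ manipulator votes placing $p$ first, but chooses the cosmetically different parameters of price~$0$ on manipulator votes, price~$1$ on the remaining vulnerable votes, and budget~$0$; your choice of prices $1$ and $l{+}1$ with budget~$l$ achieves the same effect. Your explicit handling of the unique-winner requirement via a tie-breaking order that ranks $p$ last is a point the paper glosses over, so your treatment is in fact a bit more careful there.
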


\begin{proof}
 In the reductions below, let us assume that the election to start with is a weighted election. Since we do not change the weights of any vote in the reduction and since there is a natural one to one correspondence between the votes of the original instance and the reduced instance, the proof also works for unweighted elections.
 
 Given a \textsc{Frugal-bribery} instance, we construct a \textsc{Uniform-frugal-\$bribery} instance by defining 
 the price of every vulnerable vote to be zero and the budget to be zero. Clearly, the two instances are equivalent. Hence, 
  \textsc{Frugal-bribery} $\le_\Pshort{}$ \textsc{Uniform-frugal-\$bribery}.
  
  \textsc{Uniform-frugal-\$bribery} $\le_\Pshort{}$ \textsc{Nonuniform-frugal-\$bribery} $\le_\Pshort{}$ \textsc{\$Bribery} follows from the 
  fact that  \textsc{Uniform-frugal-\$bribery} is a special case of \textsc{Nonuniform-frugal-\$bribery} which in turn is a 
  special case of \textsc{\$Bribery}.
  
  Given a \textsc{Coalitional-manipulation} instance, we construct a \textsc{Nonuniform-frugal-\$bribery} instance as follows. 
  Let $p$ be the distinguished candidate of the manipulators and $ \succ_f = p\succ others$ be any arbitrary but 
  fixed ordering of the candidates given in the \textsc{Coalitional-manipulation} instance. 
  Without loss of generality, we can assume that $p$ does not win if all the manipulators vote $\succ_f$ (Since this is a 
  polynomially checkable case of \textsc{Coalitional-manipulation}). We define the vote of the manipulators to be $\succ_f$, 
  the distinguished candidate of the campaigner to be $p$, the budget of the campaigner to be zero, 
  the price of the manipulators to be zero (notice that all the manipulators' votes are $p$-vulnerable), and the price of the rest of the vulnerable votes to be one. Clearly, the 
  two instances are equivalent. Hence, \textsc{Coalitional-manipulation} $\le_\Pshort{}$ \textsc{Nonuniform-frugal-\$bribery}.
\end{proof}

Also, the \textsc{Frugal-bribery} problem reduces to the \textsc{Coalitional-manipulation} problem by simply making all vulnerable votes to be manipulators.

\begin{proposition}\label{prop:man_bri_con}
 For every voting rule, \textsc{Frugal-bribery} $\le_\Pshort{}$ \textsc{Coalitional-manipulation}.
\end{proposition}

The \textsc{Frugal-bribery} problem also reduces to the \textsc{Swap-bribery} problem as proved below.

\begin{proposition}\label{prop:swap}
 For every voting rule, \textsc{Frugal-bribery} $\le_\Pshort{}$ \textsc{Swap-bribery}.
\end{proposition}

\begin{proof}
 Given an arbitrary instance of the {\sc Frugal-bribery} problem, we define the {\sc Swap-bribery} instance simply by defining the cost of every swap in vulnerable votes to be $0$, the cost of every swap in non-vulnerable votes to be $1$, and the budget to be $0$.
\end{proof}

We can also establish the following relation between the \textsc{Persuasion} (respectively $k$-\textsc{Persuasion}) problem and the \textsc{Frugal-bribery} (respectively \textsc{Frugal-\$bribery}) problem. The persuasions differ from the corresponding frugal bribery variants in that the briber has her own preference order, and desires to improve the outcome of the election with respect to her preference order. The following proposition is immediate from the definitions of the problems. 
\begin{proposition}\label{prop:persu}
 For every voting rule, there is a Turing reduction from \textsc{Persuasion} (respectively \textsc{$k$-persuasion}) to \textsc{Frugal-bribery} (respectively \textsc{Frugal-\$bribery}).
\end{proposition}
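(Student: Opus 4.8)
The plan is to solve \textsc{Persuasion} by iterating over all candidates that the agent could possibly want to install as the winner and querying the \textsc{Frugal-bribery} oracle once for each of them. First I would compute the current winner $w = r(\succ)$ under the given voting rule $r$ together with its tie-breaking rule, and let $\succ_a$ be the agent's own preference order. The agent can improve the outcome with respect to $\succ_a$ exactly when she can make some candidate $p$ with $p \succ_a w$ win by changing only votes whose owners are not made worse off; since an election has a unique winner, any such improvement corresponds to a specific target candidate $p$ preferred by the agent to $w$, and there are at most $m-1$ such candidates.

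The observation that fuses the two problems is that, once we fix a target candidate $p$ and the current winner is $w$, a voter strictly benefits from a change that makes $p$ win precisely when she prefers $p$ to $w$---that is, precisely when her vote is $p$-vulnerable. Consequently, for each candidate $p$ with $p \succ_a w$, the agent can beneficially persuade a set of voters to make $p$ win if and only if the $r$-\textsc{Frugal-bribery} instance with the same profile $\succ$ and distinguished candidate $p$ is a \textsc{Yes}-instance. I would therefore build one such \textsc{Frugal-bribery} instance per admissible $p$, query the oracle on each, and answer \textsc{Yes} iff at least one query answers \textsc{Yes}. Because we take the disjunction of polynomially many oracle answers, this is a Turing reduction. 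The \textsc{$k$-persuasion} to \textsc{Frugal-\$bribery} direction is identical, except that each query sets the price of every vulnerable vote to $1$ and the budget to $k$, thereby enforcing the bound of $k$ persuaded voters.

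The part that needs the most care is the two-way correctness check: I must confirm that a valid persuading move projects to a legal frugal bribery (every altered vote is $p$-vulnerable and the new winner is exactly $p$) and, conversely, that a successful frugal bribery toward a target $p \succ_a w$ yields a persuasion that is simultaneously beneficial for the agent and for every voter who changes. Both directions reduce to the single fact that ``beneficial to the changing voter'' and ``$p$-vulnerable'' coincide when the winner moves from $w$ to $p$, which is immediate from the definition of vulnerable votes. The one genuinely important point to flag is that the reduction must be Turing rather than many-to-one, since a single \textsc{Frugal-bribery} instance cannot express the disjunction over all admissible targets $p$; this is exactly why (as already noted in the related-work discussion) the known \NPCshort{} hardness results for persuasion do not carry over to the frugal bribery variants.
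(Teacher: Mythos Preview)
Your proposal is correct and follows essentially the same approach as the paper: iterate over candidate targets and query the oracle once per target, setting unit prices and budget $k$ for the \textsc{$k$-persuasion} variant. Your write-up is considerably more detailed than the paper's two-sentence proof---in particular, you spell out the correctness argument (the coincidence of ``beneficial to the voter'' and ``$p$-vulnerable'') and restrict the iteration to candidates the agent prefers to the current winner, which is a harmless refinement of the paper's ``iterate over all possible distinguished candidates.''
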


\begin{proof}
 Given an algorithm for the \textsc{Frugal-bribery} problem, we iterate over all possible distinguished candidates to have an algorithm for the persuasion problem. 
 
 Given an algorithm for the \textsc{Frugal-\$bribery} problem, we iterate over all possible distinguished candidates and fix the price of the corresponding vulnerables to be one to have an algorithm for the $k$-persuasion problem.
\end{proof}
\section{Results for Unweighted Elections}\label{sec:unwt}

\aaaiversion{In this supplementary material, we often break some results in the shorter version into separable parts for improve presentation.}
Now we present the results for unweighted elections. We begin with some easy observations that follow from known results. 

\shortversion{
 The following result follows immediately from the literature on the \textsc{Coalitional-manipulation}~\cite{xia2009complexity}, the \textsc{\$Bribery} problems~\cite{faliszewski2006complexity,faliszewski2008nonuniform} and~\Cref{prop:conn,prop:man_bri_con}.
\begin{observation}\shortversion{[$\star$]}\label{lem:frugal}
 %(i) The \textsc{Frugal-bribery} problem is in \Pshort{} for $k$-approval for any $k$, simplified Bucklin, and plurality with runoff voting rules (ii) The \textsc{Frugal-\$bribery} problem is in \Pshort{} for the plurality and veto voting rules. (iii) The \textsc{Frugal-\$bribery} problem is \NPC{} for Borda, maximin, Copeland, and STV voting rules. 
 \begin{itemize}[noitemsep,leftmargin=*]
  \item The \textsc{Frugal-bribery} problem is in \Pshort{} for the $k$-approval for any $k$, simplified Bucklin, and plurality with runoff voting rules.
  \item The \textsc{Frugal-\$bribery} problem is in \Pshort{} for the plurality and veto voting rules.
  \item The \textsc{Frugal-\$bribery} problem is \NPC{} for the Borda, maximin, Copeland, and STV voting rules.
 \end{itemize}
\end{observation}
}\aaaiversion{We divide Observation $1$ of short version into Observation $1, 2,$ and $3$ below.}
\longversion{
\begin{observation}\label{lem:frugalP}
 The \textsc{Frugal-bribery} problem is in \Pshort{} for the $k$-approval voting rule for any $k$, simplified Bucklin, and plurality with runoff voting rules.
\end{observation}

\begin{proof}
 The \textsc{Coalitional-manipulation} problem is in \Pshort{} for these voting rules~\cite{xia2009complexity}. Hence, the result follows from~\Cref{prop:man_bri_con}.
\end{proof}

\begin{observation}\label{lem:frugalPluP}
 The \textsc{Frugal-\$bribery} problem is in \Pshort{} for the plurality and veto voting rules.
\end{observation}

\begin{proof}
 The \textsc{\$Bribery} problem is in \Pshort{} for the plurality~\cite{faliszewski2006complexity} and 
 veto~\cite{faliszewski2008nonuniform} voting rules. Hence, the result follows from~\Cref{prop:conn}.
\end{proof}

\begin{observation}\label{lem:frugalNPC}
 The \textsc{Frugal-\$bribery} problem is \NPC{} for Borda, maximin, Copeland, and STV voting rules.
\end{observation}

\begin{proof}
 The \textsc{Coalitional-manipulation} problem is \NPC{} for the above voting rules. Hence, the result follows from~\Cref{prop:conn}.
\end{proof}
}
 We now present our main results. We begin with showing that the \textsc{Frugal-bribery} problem for the Borda voting rule. To this end, we reduce from the \PS problem, which is known to be \NPC{}~\cite{Yu:2004:MMT:1013651.1013680}. The \PS problem is defined as follows.

\defproblem{\PS}{$n$ integers $X_i, i\in[n]$ with $1\le X_i\le 2n$ for every $i\in[n]$ and $\sum_{i=1}^n X_i = n(n+1)$.}{Do there exist two permutations $\pi$ and $\sigma$ of $[n]$ such that $\pi(i)+\sigma(i)=X_i$ for every $i\in[n]?$}

% \begin{definition}(\PS)\\
%  Given $n$ integers $X_i, i\in[n]$ with $1\le X_i\le 2n$ for every $i\in[n]$ and $\sum_{i=1}^n X_i = n(n+1)$, do there exist two permutations $\pi$ and $\sigma$ of $[n]$ such that $\pi(i)+\sigma(i)=X_i$ for every $i\in[n]?$
% \end{definition}

We now prove that the \textsc{Frugal-bribery} problem is \NPC{} for the Borda voting rule, by a reduction from \PS. Our reduction is inspired by the reduction used by Davies et al.~\cite{davies2011complexity} and Betzler et al.~\cite{betzler2011unweighted} to prove \NP-completeness of the \textsc{Coalitional-manipulation} problem for the Borda voting rule.

\begin{theorem}\label{thm:frugalBordaNPC}
 The \textsc{Frugal-bribery} problem is \NPC{} for the Borda voting rule.
\end{theorem}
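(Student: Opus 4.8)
The plan is to establish \NPCshort{} by first noting membership in \NPshort{} (a nondeterministic machine guesses the revised vulnerable votes and verifies in polynomial time that $p$ becomes the winner), and then giving a polynomial-time many-to-one reduction from X3C. Given an X3C instance $(U, \{S_1, \dots, S_t\})$ with $|U| = 3\kappa$, I would build a candidate set consisting of one candidate per universe element, the distinguished candidate $p$, the designated current winner $c$, and a pool of dummy candidates (including the candidate $z$ from the road map and enough padding candidates to control scores precisely). For each set $S_i$ I introduce one ``set vote'' of the form $p \succ (U \setminus S_i) \succ c \succ z \succ S_i$; by construction these votes have $p$ on top, so they are exactly the vulnerable votes the briber may edit, while all remaining score-fixing votes are made non-vulnerable by placing $c$ above $p$ in them.

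The crux of the correctness argument is the score accounting, which I would set up so that the following three invariants hold simultaneously. First, $c$ must lead $p$ by exactly $4\kappa$ points, matching the fact that $c$ sits in the fifth-from-bottom position of every set vote, so the \emph{only} way to erase $c$'s advantage is to push $c$ to the very bottom in exactly $\kappa$ of the set votes (each such push costs $c$ at most four points, and the briber cannot raise $p$ since $p$ already tops every editable vote). Second, I would use the padding candidates and the non-vulnerable filler votes to force each universe candidate to trail $p$ by \emph{exactly one} point, so that the briber can afford to raise any given universe candidate's score by at most one; since pushing $c$ down in a set vote necessarily lifts the three elements of $S_i$ by one each, the $\kappa$ chosen set votes must correspond to three-sets that are pairwise disjoint and hence cover $U$. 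Third, I would verify that $p$ itself, the dummies, and in particular $z$ never overtake $p$ regardless of how the briber rearranges the lower portions of the edited votes. Putting these together gives the desired equivalence: $p$ can be made the unique winner by editing only vulnerable votes if and only if the chosen $\kappa$ set votes form an exact cover.

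I would carry out the steps in this order: (i) fix the tie-breaking rule (say $p$ is favored, to make ``unique winner'' manageable) and count candidates; (ii) write down the set votes and a small number of auxiliary non-vulnerable votes, then compute the initial Borda scores of $p$, $c$, each universe candidate, and each dummy in closed form, introducing padding candidates as needed to hit the exact gaps $4\kappa$ and $1$; (iii) prove the forward direction (an exact cover yields a successful bribery) by explicitly exhibiting the edited votes that bury $c$; and (iv) prove the reverse direction via the invariant-based argument above. The main obstacle I anticipate is the simultaneous enforcement of \emph{both} gaps --- the $4\kappa$ gap against $c$ and the tight one-point gap against every universe candidate --- using only non-vulnerable votes (i.e.\ votes in which $c$ beats $p$), since every filler vote I add to tune scores must keep $c$ ahead of $p$ and must not inadvertently let some dummy or $z$ climb above $p$; getting these constraints to be mutually consistent, and ensuring the total number of voters and candidates stays polynomial, is the delicate part of the construction.
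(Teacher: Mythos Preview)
Your plan is the paper's plan: reduce from X3C, use set votes of the shape $p \succ \cdots \succ c \succ z \succ S_i$, engineer $s(c)-s(p)=4\kappa$ and a one-point margin between $p$ and every universe candidate, favour $p$ in tie-breaking, and argue that any successful bribery encodes an exact cover. The construction and the forward direction are fine. The gap is in your reverse-direction reasoning.

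You write that ``the \emph{only} way to erase $c$'s advantage is to push $c$ to the very bottom in \emph{exactly} $\kappa$ of the set votes'', but your parenthetical justification (each push costs $c$ at most four) gives only $\geq \kappa$. There is no budget in \textsc{Frugal-bribery}, so nothing you have said stops the briber from touching $2\kappa$ votes and dropping $c$ by two in each. You then claim that ``pushing $c$ down in a set vote necessarily lifts the three elements of $S_i$ by one each''; this is false as stated, since the briber is free to route the vacated low positions to $z$, to dummies pulled down from above, or unevenly among the $S_i$ elements. As written, your two invariants do not jointly force an exact cover.

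What closes the argument in the paper is the \emph{tight} constraint on $z$: $z$ is set to trail $p$ by exactly $\kappa$. The key observation is that in any edited vote in which $c$ drops, $z$'s score in that vote must strictly rise --- otherwise, with $z$ still occupying one of the three slots below $c$'s old position, at least one $S_i$ element is forced to a slot with score $\geq 4$, gaining at least two points and overtaking $p$. Hence $z$ gains at least one per edited vote, so at most $\kappa$ votes are edited; combined with your lower bound this gives exactly $\kappa$, each with $c$ at the very bottom, and only then does the one-point universe margin force the chosen $S_i$'s to be pairwise disjoint and hence a cover. You do include $z$ and note it ``must never overtake $p$'', but you treat this as a side verification rather than the mechanism that caps the number of edits; that is the missing idea.
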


\begin{proof}
 The problem is clearly in \NPshort{}. To show \NPshort{}-hardness, we reduce an arbitrary instance of the \PS problem to the \textsc{Frugal-bribery} problem for the Borda voting rule. Let $(X_1, \ldots, X_n)$ be an instance of the \PS problem. Without loss of generality, let us assume that $n$ is an odd integer -- if $n$ is an even integer, then we consider the instance $(X_1, \ldots, X_n, X_{n+1}=2(n+1))$ which is clearly equivalent to the instance $(X_1, \ldots, X_n).$
 
 We define a \textsc{Frugal-bribery} instance $(\CC, \PP, p)$ as follows. The candidate set is: 
 
 $$\mathcal{C} = \XX \uplus D\uplus \{p,c\}, \text{ where } \XX =  \{\xxx_i: i\in[n]\} \text{ and }|D| = 3n - 1$$ 
 
 Note that the total number of candidates is $4n+1$, and therefore the Borda score of a candidate placed at the top position is $4n$.
 
Before describing the votes, we give an informal overview of how the reduction will proceed. The election that we define will consist of exactly two vulnerable votes. Note that when placed at the top position in these two votes, the distinguished candidate $p$ gets a score of $8n$ ($4n$ from each vulnerable vote). We will then add non-vulnerable votes, which will be designed to ensure that, among them, the score of $\xxx_i$ is $8n-X_i$ more than the score of the candidate $p$. Using the ``dummy candidates'', we will also be able to ensure that the candidates $\xxx_i$ receive (without loss of generality) scores between $1$ and $n$ from the modified vulnerable votes. 

Now suppose these two vulnerable votes can be modified to make $p$ win the election. Let $s_1$ and $s_2$ be the scores that $\xxx_i$ obtains from these altered vulnerable votes. It is clear that for $p$ to emerge as a winner, $s_1+s_2$ must be at most $X_i$. Since the Borda scores for the candidates in \XX range from $1$ to $n$ in the altered vulnerable votes, the total Borda score that all the candidates in \XX can accumulate from two altered vulnerable votes is $n(n+1)$. On the other hand, since the sum of the $X_i$'s is also $n(n+1)$, it turns out that $s_1+s_2$ must in fact be equal to $X_i$ for the candidate $p$ to win. From this point, it is straightforward to see how the permutations $\sigma$ and $\pi$ can be inferred from the modified vulnerable votes: $\sigma(i)$ is given by the score of the candidate $\xxx_i$ from the first vote, while $\pi(i)$ is the score of the candidate $\xxx_i$ from the second vote. These functions turn out to be permutations because these $n$ candidates receive $n$ distinct scores from these votes. 
 
We are now ready to describe the construction formally. We remark that instead of $8n-X_i$, as described above, we will maintain a score difference of either $8n-X_i$ or $8n-X_i-1$ depending on whether $X_i$ is even or odd respectively --- this is a minor technicality that comes from the manner in which the votes are constructed and does not affect the overall spirit of the reduction. 

Let us fix any arbitrary order $\succ_f$ among the candidates in $\XX \uplus D.$ For any subset $A\subset \XX \uplus D,$ let $\overrightarrow{A}$ be the ordering among the candidates in $A$ as defined in $\succ_f$ and $\overleftarrow{A}$ the reverse order of $\overrightarrow{A}$. For each $i\in[n]$, we add two votes $v_i^j$ and $v_i^{j^\pr}$ as follows for every $j\in[4]$. Let $\el$ denote $|D| = 3n-1$. Also, for $d\in D$, let $D_{i}, D_{\nfrac{\el}{2}}\subset D\setminus\{d\}$ be such that: $$|D_{i}| = \nfrac{\el}{2}+n+1-\lceil\nfrac{X_i}{2}\rceil \mbox{ and } |D_{\nfrac{\el}{2}}| = \nfrac{\el}{2}.$$

 \[ v_i^j : 
 \begin{cases}
 c \suc p \suc d \suc \overrightarrow{\CC\setminus(\{d, c, p, \xxx_i\}\uplus D_{i})} \suc \xxx_i \suc  \overrightarrow{D_{i}} & \text{for } 1\le j\le 2\\
 \xxx_i \suc \overleftarrow{D_{i}} \suc \overleftarrow{\CC\setminus(\{d, c, p, \xxx_i\}\uplus D_{i})} \suc c \suc p \suc d & \text{for } 3\le j\le 4
 \end{cases}
 \]
 
 \[ v_i^{j^\pr} : 
 \begin{cases}
 c \suc p \suc d \suc \overrightarrow{\CC\setminus(\{d, c, p, \xxx_i\}\uplus D_{\nfrac{\el}{2}})} \suc \xxx_i \suc  \overrightarrow{D_{\nfrac{\el}{2}}} & \text{for } 1\le j^\pr \le 2\\
 \xxx_i \suc \overleftarrow{D_{\nfrac{\el}{2}}} \suc \overleftarrow{\CC\setminus(\{d, c, p, \xxx_i\}\uplus D_{\nfrac{\el}{2}})} \suc c \suc p \suc d & \text{for } 3\le j^\pr \le 4
 \end{cases}
 \]

% \[ v_i^j : 
% \begin{cases}
% c \suc p \suc d \suc \overrightarrow{\CC\setminus(\{d, c, p, \xxx_i\}\uplus D_{\nfrac{\el}{2}+n+1-\lceil\nfrac{X_i}{2}\rceil})} \suc \xxx_i \suc  \overrightarrow{D_{\nfrac{\el}{2}+n+1-\lceil\nfrac{X_i}{2}\rceil}} & \text{for } 1\le j\le 2\\
% \xxx_i \suc \overleftarrow{D_{\nfrac{\el}{2}+n+1-\lceil\nfrac{X_i}{2}\rceil}} \suc \overleftarrow{\CC\setminus(\{d, c, p, \xxx_i\}\uplus D_{\nfrac{\el}{2}+n+1-\lceil\nfrac{X_i}{2}\rceil})} \suc c \suc p \suc d & \text{for } 3\le j\le 4
% \end{cases}
% \]
% 
% \[ v_i^{j^\pr} : 
% \begin{cases}
% c \suc p \suc d \suc \overrightarrow{\CC\setminus(\{d, c, p, \xxx_i\}\uplus D_{\nfrac{\el}{2}})} \suc \xxx_i \suc  \overrightarrow{D_{\nfrac{\el}{2}}} & \text{for } 1\le j\le 2\\
% \xxx_i \suc \overleftarrow{D_{\nfrac{\el}{2}}} \suc \overleftarrow{\CC\setminus(\{d, c, p, \xxx_i\}\uplus D_{\nfrac{\el}{2}})} \suc c \suc p \suc d & \text{for } 3\le j\le 4
% \end{cases}
% \]
 
It is convenient to view the votes corresponding to $j = 3,4$ as a near-reversal of the votes in $j = 1,2$ (except for candidates $c,d$ and $\xxx_i$).  Let $\PP_1 = \{v_i^j, v_i^{j^\pr} : i\in[n], j\in[4]\}.$ Since there are $8n$ votes in all, and $c$ always appears immediately before $p$, it follows that the score of $c$ is exactly $8n$ more than the score of the candidate $p$ in $\PP_1$. 

We also observe that the score of the candidate $\xxx_i$ is exactly $2(\el+n+1)-X_i = 8n-X_i$ more than the score of the candidate $p$ in $\PP_1$ for every $i\in[n]$ such that $X_i$ is an even integer. On the other hand, the score of the candidate $\xxx_i$ is exactly $2(\el+n+1)-X_i-1 = 8n-X_i-1$ more than the score of the candidate $p$ in $\PP_1$ for every $i\in[n]$ such that $X_i$ is an odd integer. Note that for $i^\pr \in [n] \setminus \{i\}$, $p$ and $\xxx_i$ receive the same Borda score from the votes $v_{i^\pr}^j$ and $v_{i^\pr}^{j^\pr}$ (where $j,j^\pr \in [4]$).

We now add the following two votes $\mu_1$ and $\mu_2$. 
 
\[ \mu_1 : p \succ c \succ \text{others} \]
\[ \mu_2 : p \succ c \succ \text{others} \]

 Let $\PP = \PP_1 \uplus \{\mu_1, \mu_2\}, \XX^o = \{\xxx_i : i\in[n], X_i \text{ is odd}\},$ and $\XX^e = \XX\setminus\XX^o.$ We recall that the distinguished candidate is $p.$ The tie-breaking rule is according to the order $\XX^o \suc p \succ \text{others}.$ We claim that the \textsc{Frugal-bribery} instance $(\CC, \PP, p)$ is equivalent to the \PS instance $(X_1, \ldots, X_n).$
 
 In the forward direction, suppose there exist two permutations $\pi$ and $\sigma$ of $[n]$ such that $\pi(i) + \sigma(i) = X_i$ for every $i\in[n].$ We replace the votes $\mu_1$ and $\mu_2$ with respectively $\mu_1^\pr$ and $\mu_2^\pr$ as follows.
 
 \[ \mu_1^\pr : p \suc D \suc \xxx_{\pi^{-1}(n)} \suc \xxx_{\pi^{-1}(n-1)} \suc \cdots \suc \xxx_{\pi^{-1}(1)} \suc c \]
 \[ \mu_2^\pr : p \suc D \suc \xxx_{\sigma^{-1}(n)} \suc \xxx_{\sigma^{-1}(n-1)} \suc \cdots \suc \xxx_{\sigma^{-1}(1)} \suc c \]
 
 We observe that, the candidates $c$ and every $\xxx\in\XX^e$ receive same score as $p$, every candidate $\xxx^\pr\in\XX^o$ receives $1$ score less than $p$, and every candidate in $D$ receives less score than $p$ in $\PP_1\uplus\{\mu_1^\pr, \mu_2^\pr\}.$ Hence $p$ wins in $\PP_1\uplus\{\mu_1^\pr, \mu_2^\pr\}$ due to the tie-breaking rule. Thus $(\CC, \PP, p)$ is a \YES instance of \textsc{Frugal-bribery}.
 
 To prove the other direction, suppose the \textsc{Frugal-bribery} instance is a \YES{} instance. Notice that the only vulnerable votes are $\mu_1$ and $\mu_2.$ Let $\mu_1^\pr$ and $\mu_2^\pr$ be two votes such that the candidate $p$ wins in the profile $\PP_1\uplus\{\mu_1^\pr, \mu_2^\pr\}.$ We assume, without loss of generality, that candidate $p$ is placed at the first position in both $\mu_1^\pr$ and $\mu_2^\pr.$ Since $c$ receives $8n$ scores more than $p$ in $\PP_1,$ $c$ must be placed at the last position in both $\mu_1^\pr$ and $\mu_2^\pr$ since otherwise $p$ cannot win in $\PP_1\uplus\{\mu_1^\pr, \mu_2^\pr\}.$ We also assume, without loss of generality, that every candidate in $D$ is preferred over every candidate in \XX since otherwise, if $\xxx \suc d$ in either $\mu_1^\pr$ or $\mu_2^\pr$ for some $\xxx\in\XX$ and $d\in D,$ then we can exchange the positions of \xxx and $d$ and $p$ continues to win since no candidate in $D$ receives more score than $p$ in $\PP_1.$ Hence, every $\xxx\in\XX$ receives some score between $1$ and $n$ in both the $\mu_1^\pr$ and $\mu_2^\pr.$ Let us define two permutations $\pi$ and $\sigma$ of $[n]$ as follows. For every $i\in[n]$, we define $\pi(i)$ and $\sigma(i)$ to be the scores the candidate $\xxx_i$ receives in $\mu_1^\pr$ and $\mu_2^\pr$ respectively. The fact that $\pi$ and $\sigma$, as defined above, is indeed a permutation of $[n]$ follows from the structure of the votes $\mu_1^\pr, \mu_2^\pr$ and the Borda score vector. Since $p$ wins in $\PP_1\uplus\{\mu_1^\pr, \mu_2^\pr\},$ we have $\pi(i) + \sigma(i) \le X_i.$ We now have the following.
 
 \[ n(n+1) = \sum_{i=1}^n (\pi(i) + \sigma(i)) \le \sum_{i=1}^n X_i = n(n+1) \]
 
 Hence, we have $\pi(i) + \sigma(i) = X_i$ for every $i\in[n]$ and thus $(X_1, \ldots, X_n)$ is a \YES instance of \PS.
\end{proof}

We will use \Cref{score_gen} in subsequent proofs, which has been shown before (see, for instance, the work of Baumeister et al.~\cite{baumeister2011computational} and Dey et al.~\cite{Deykernel}).
% The following lemma has been used before (Lemma $4.2$ in~\cite{baumeister2011computational}). We will use it in subsequent proofs.
\begin{lemma}\shortversion{[$\star$]}\label{score_gen}
Let $\mathcal{C} = \{c_1, \ldots, c_m\} \uplus D, (|D|>0)$ be a set of candidates and $\overrightarrow{\alpha}$ a normalized score vector of length $|\mathcal{C}|$. Then, for any given $\mathbf{X} = (X_1, \ldots, X_m) \in \mathbb{Z}^m$, there exists $\lambda\in \mathbb{R}$ and a voting profile such that the $\overrightarrow{\alpha}$-score of $c_i$ is $\lambda + X_i$ for all $1\le i\le m$,  and the score of candidates $d\in D$ is less than $\lambda$. Moreover, the number of votes is $O(poly(|\mathcal{C}|\cdot \sum_{i=1}^m |X_i|))$, where $|X_i|$ is the absolute value of $X_i$.
\end{lemma}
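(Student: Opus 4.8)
The plan is to build the entire profile out of one reusable gadget: a block of $N := |\mathcal{C}|$ votes that moves exactly one unit of score from any chosen candidate to any other while leaving every remaining candidate untouched. First I would record two reductions. Since $\overrightarrow{\alpha}$ is normalized, there is a position $j$ with $\alpha_j = 1$ and $\alpha_{j+1} = \cdots = \alpha_N = 0$, so the gap $\alpha_j - \alpha_{j+1}$ equals exactly $1$; this unit gap is the source of all fine control. I would also assume without loss of generality that every $X_i \ge 0$, achieved by subtracting $s := \min(0, \min_k X_k) \le 0$ from all coordinates and folding the shift back into $\lambda$ at the very end. Because $s \le 0$, this step can only help the dummy inequality, so it costs nothing.

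The engine is the following \emph{unit-transfer block}. Fix an ordered pair $(a,b)$ of candidates and take the $N$ cyclic rotations of a seed order in which $b$ sits immediately before $a$. Across these $N$ votes every candidate occupies each position exactly once, so each earns the same total $\Sigma := \sum_{k=1}^{N} \alpha_k$. Now, in the unique rotation where $a$ is at position $j+1$ (and hence its predecessor $b$ is at position $j$), swap $a$ and $b$. Because $\alpha_j - \alpha_{j+1} = 1$, this single change raises $a$ by exactly one and lowers $b$ by exactly one, while every other candidate's contribution is untouched. The net effect of the block is therefore $a \mapsto \Sigma+1$, $b \mapsto \Sigma-1$, and $x \mapsto \Sigma$ for every other $x$. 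Note that this needs only one dummy and works for every value of $j$.

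I would then assemble the profile from $m + X$ such blocks, where $X := \sum_i X_i$, always draining one fixed dummy $d_1$ and routing the freed unit to a candidate of $\{c_1,\dots,c_m\}$, using exactly $1 + X_i$ blocks directed at $c_i$. After all blocks, every candidate carries the rotation baseline $(m+X)\Sigma$; additionally $c_i$ has gained $1 + X_i$, while $d_1$ has lost $m+X$ and every other dummy is untouched. Setting $\lambda := (m+X)\Sigma + 1$ gives $c_i = \lambda + X_i$ exactly. The deliberate extra $+1$ handed to each $c_i$ is what makes the dummy inequality \emph{strict} even with a single dummy: the undrained dummies sit at $(m+X)\Sigma = \lambda - 1 < \lambda$, and $d_1$ sits strictly lower still, so all of $D$ lies strictly below $\lambda$. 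Finally I undo the normalization shift by replacing $\lambda$ with $\lambda - s$; since $s \le 0$ this preserves both $c_i = \lambda + X_i$ and the strict separation. The vote count is $N$ per block times $m+X$ blocks, i.e. $O\!\left(N\left(N + \sum_i |X_i|\right)\right)$, which is $O(\text{poly}(|\mathcal{C}| \cdot \sum_i |X_i|))$ as required.

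The hard part is the design of the unit-transfer block itself: normalization is invoked precisely to expose the unit gap $\alpha_j - \alpha_{j+1} = 1$, and without it one can only displace scores by the fixed, rule-dependent amounts produced by plain rotations, which lacks the unit resolution needed to hit an arbitrary target $\mathbf{X}$. The secondary subtlety is guaranteeing a \emph{strict} gap between the dummies and $\lambda$ when $|D|$ may be as small as one and $j$ may be large (so the dummies cannot simply be parked in the score-zero tail); the $+1$ overshoot baked into each $c_i$ in the previous paragraph resolves this uniformly. Everything beyond these two points is routine bookkeeping.
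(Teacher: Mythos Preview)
Your proof is correct and takes essentially the same approach as the paper: both build a unit-transfer gadget from the $|\mathcal{C}|$ cyclic rotations of a fixed order together with a single swap at the position where $\alpha_j-\alpha_{j+1}=1$, then stack these blocks to realize the target score differences while draining a dummy. The only cosmetic differences are that the paper handles negative $X_i$ directly with reversed blocks $B_{c_i}^{d}$ rather than shifting all $X_i$ nonnegative, and it lowers dummy scores with extra blocks at the end rather than baking in your uniform $+1$ overshoot; your variant is arguably cleaner on the strict-inequality point.
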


 Note that the number of votes used in \Cref{score_gen} is polynomial in $m$ if $|D|$ and $|X_i|$ are polynomials in $m$ for every $i\in [m]$, which indeed is the case in all our proofs that use \Cref{score_gen}. Hence, the reductions in the proofs that use \Cref{score_gen} run in polynomial time.
 
 We now show the results for various classes of scoring rules. To this end, we reduce from the \textsc{Exact-cover-by-3-sets} (X3C) problem, which is known to be \NPC{} \cite{garey1979computers}. The X3C problem is defined as follows.
 
\defproblem{X3C}{A universe $U$ and $t$ subsets $S_1, \dots, S_t \subset U$ with $|S_i|=3 ~\forall i\in[t].$}{Does there exist an index set $I\subseteq [t]$ 
 with $|I|=\nfrac{|U|}{3}$ such that $\uplus_{i\in I} S_i = U$?}

% \begin{definition}(The X3C Problem)\\
%  Given a universe $U$ and $t$ subsets $S_1, \dots, S_t \subset U$ with $|S_i|=3 ~\forall i\in[t],$ does there exist an index set $I\subseteq [t]$ 
%  with $|I|=\nfrac{|U|}{3}$ such that $\uplus_{i\in I} S_i = U$? We denote an arbitrary instance of X3C by $(U, \{S_1, \dots, S_t\})$.
% \end{definition}
We denote an arbitrary instance of X3C by $(U, \{S_1, \dots, S_t\})$.

\begin{theorem}\label{thm:frugalKappNPC}
 The \textsc{Frugal-\$bribery} problem is \NPC{} for the $k$-approval voting rule for any constant $k\ge 3$, even if the price of every vulnerable vote is either $1$ or $\infty$.
\end{theorem}

\begin{proof}
 The problem is clearly in \NPshort{}. To show \NPshort{}-hardness, we reduce an arbitrary instance of X3C to \textsc{Frugal-\$bribery}. 
 Let $(U, \{S_1, \dots, S_t\})$ be an instance of X3C. We define a \textsc{Frugal-\$bribery} instance as follows. 
 The candidate set is: 
 
 \[\mathcal{C} = U\uplus D\uplus \{p,q\},\text{ where }|D|=k-1\]
 
 For each $S_i, 1\le i\le t$, we add a vote $v_i$ as follows.
 
 \[ v_i : S_i \succ D \succ p \suc q \suc \text{others} \]
 
 By \Cref{score_gen}, we can add $poly(|U|)$ many additional votes to ensure the following scores (denoted by $s(\cdot)$).
 
 \begin{itemize}
 	\item $s(q) = s(p) + \nfrac{|U|}{3}$
 	\item $s(x) = s(p) + \nfrac{|U|}{3} + 1, \forall x\in U$
 	\item $s(d) < s(p) - |U|, \forall d\in D$
 \end{itemize}
% \[ s(q) = s(p) + \nfrac{|U|}{3},\]
% \[ s(x) = s(p) + \nfrac{|U|}{3} + 1, \forall x\in U,\]
% \[ s(d) < s(p) - |U|, \forall d\in D \]
 
 The tie-breaking rule is ``$p \succ \text{others}$''. The winner is $q$. The distinguished candidate is $p$ and thus all the votes in $\{v_i : 1\le i\le t\}$ are vulnerable. The price of every $v_i$ is $1$ and the price of every other vulnerable vote is $\infty$. The budget is $\nfrac{|U|}{3}$. This completes the construction. We now prove that the two instances are equivalent. 
 
 In the forward direction, let us suppose that there exists an index set $I\subseteq [t]$ with $|I|=\nfrac{|U|}{3}$ such that $\uplus_{i\in I} S_i = U$. We replace the votes $v_i$ with $v_i^{\prime}, i\in I,$ which are defined as follows.
 \[ v_i^{\prime} : \underbrace{p \succ D}_{k \text{ candidates}} \succ \text{others} \]
 This makes the score of $p$ not less than the score of any other candidate and thus $p$ wins. 
 
\longversion{To prove the result in}\shortversion{For} the other direction, let us suppose that the \textsc{Frugal-\$bribery} instance is a \YES{} instance. Then there exists $\VV\subset\{v_i : 1\le i\le t\}$ with $|\VV|=\nfrac{|U|}{3}$ such that no vote in $\{v_i : 1\le i\le t\}\setminus\VV$ has been changed by the briber. Let the vote that replaces $v\in\VV$ be $v^\pr$ and let $\VV^\pr = \{v^\pr: v\in\VV\}.$ We assume, without loss of generality, that the candidate $p$ is placed within the first $k$ positions of every vote $v^\pr\in\VV^\pr$. Hence, the final score of the candidate $p$ is $s(p)+\nfrac{|U|}{3}$. We observe that, in every vote $v_i^\pr\in\VV^\pr,$ the candidate $q$ and the corresponding $S_i$ should not be placed within the top $k$ positions since $s(p)+\nfrac{|U|}{3} = s(q)$ and $s(p)+\nfrac{|U|}{3} = s(x)-1$ for every $x\in U.$ We claim that the $S_i$'s corresponding to the $v_i$'s in \VV form an exact set cover. Indeed, otherwise, there will be a candidate $x\in U$, whose score never decreases which contradicts the fact that $p$ wins the election since $s(p)+\nfrac{|U|}{3} = s(x)-1$.
\end{proof}

We next present a similar result for the $k$-veto voting rule\shortversion{ which can be proved by a reduction from the X3C problem}.
 
\begin{theorem}\shortversion{[$\star$]}\label{thm:frugalKvetoNPC}
 The \textsc{Frugal-\$bribery} problem is \NPC{} for the $k$-veto voting rule for any constant $k\ge3$, even if the price of every vulnerable vote is either $1$ or $\infty$.
\end{theorem}

\longversion{ \begin{proof}
 The problem is clearly in \NPshort{}. To show \NPshort{}-hardness, we reduce an arbitrary instance of X3C to \textsc{Frugal-\$bribery}. Let $(U, \{S_1,S_2, \dots, S_t\})$ be any instance of X3C. We define a \textsc{Frugal-\$bribery} instance as follows. The candidate set is: 
 
 \[\mathcal{C} = U\uplus Q\uplus \{p, a_1, a_2, a_3, d\},\text{ where }|Q|=k-3\]
 
 For each $S_i, 1\le i\le t$, we add a vote $v_i$ as follows.
 \[ v_i : p \succ \text{others} \succ \underbrace{S_i \succ Q}_{k \text{ candidates}} \]
 By \Cref{score_gen}, we can add $poly(|U|)$ many additional votes to ensure following scores (denoted by $s(\cdot)$).
 
 \begin{itemize}
 	\item $s(p) > s(d), s(p) = s(x) + 2, \forall x\in U$
 	\item $s(p) = s(q) + 1, \forall q\in Q$
 	\item $s(p) = s(a_i) - \nfrac{|U|}{3} + 1, \forall i=1, 2, 3$
 \end{itemize}

% \[ s(p) > s(d), s(p) = s(x) + 2, \forall x\in U, s(p) = s(q) + 1, \forall q\in Q,\]
% 
% \[ s(p) = s(a_i) - \nfrac{|U|}{3} + 1, \forall i=1, 2, 3 \] 
 
 The tie-breaking rule is ``$a_1 \succ \cdots \succ p$''. The winner is $a_1$.
 The distinguished candidate is $p$ and thus all the votes in $\{v_i : 1\le i\le t\}$ are vulnerable. 
 The price of every $v_i$ is one and the price of any other vote is $\infty$. The budget is $\nfrac{|U|}{3}$. We claim that the two instances are equivalent. 
 
 In the forward direction, suppose there exists an index set $I\subseteq \{1,\dots,t\}$ with $|I|=\nfrac{|U|}{3}$ such that $\uplus_{i\in I} S_i = U$. We replace the votes $v_i$ with $v_i^{\prime}, i\in I,$ which are defined as follows. 
 
 \[ v_i^{\prime} : \text{others} \succ \underbrace{a_1 \succ a_2 \succ a_3 \succ Q}_{k \text{ candidates}} \]
 
 The score of each $a_i$ decreases by $\nfrac{|U|}{3}$ and their final scores are $s(p)-1$, since the score of $p$ is not affected 
 by this change. Also the final score of each $x\in U$ is $s(p)-1$ since $I$ forms an exact set cover. This makes $p$ win the election.
 
To prove the result in the other direction, suppose the \textsc{Frugal-\$bribery} instance is a \YES{} instance. Then, notice that there will be exactly $\nfrac{|U|}{3}$ votes in $v_i, 1\le i\le t$, where every $a_j, j=1, 2, 3$, should come in the last $k$ positions since $s(p) = s(a_j) - \nfrac{|U|}{3} + 1$ and the budget is $\nfrac{|U|}{3}$. Notice that candidates in $Q$ must not be placed within top $m-k$ positions since $s(p) = s(q) + 1$, for every $q\in Q$. Hence, in the votes that have been changed, $a_1, a_2, a_3$ and all the candidates in $Q$ must occupy the last $k$ positions. We claim that the $S_i$'s corresponding to the $v_i$'s that have been changed must form an exact set cover. If not, then, there must exist a candidate $x\in U$ and two votes $v_i$ and $v_j$ such that, both $v_i$ and $v_j$ have been replaced by $v_i^{\prime} \ne v_i$ and $v_j^{\prime} \ne v_j$ and the candidate $x$ was present within the last $k$ positions in both $v_i$ and $v_j$. This makes the score of $x$ at least the score of $p$ which contradicts the fact that $p$ wins.
\end{proof}}

 We now show that there exists a polynomial time algorithm for the \textsc{Frugal-\$bribery} problem for the $k$-approval, simplified Bucklin, and plurality with runoff voting rules, when the budget is a constant. The result below follows from the existence of a polynomial time algorithm for the \textsc{Coalitional-manipulation} problem for these voting rules for any number of manipulators~\cite{xia2009complexity}.
 
\begin{theorem}\shortversion{[$\star$]}\label{thm:frugalKappP}
 The \textsc{Frugal-\$bribery} problem is in \Pshort{} for the $k$-approval, simplified Bucklin, and plurality with runoff voting rules, if the budget is a constant.
\end{theorem}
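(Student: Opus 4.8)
The plan is to exploit the fact that a constant budget severely limits how many \emph{priced} votes can be altered, and to reduce the residual problem to coalitional manipulation, which is polynomial-time solvable for all three rules. First I would compute the original winner and the set of vulnerable votes, and partition the vulnerable votes into the set $F$ of \emph{free} votes (those of price $0$) and the set $C$ of \emph{costly} votes (those of positive integer price). Since every costly vote costs at least $1$ and the budget $b$ is a constant, any feasible solution changes at most $b$ costly votes; hence there are only $\sum_{j=0}^{b}\binom{|C|}{j}=O(n^{b})$ candidate subsets $C'\subseteq C$ with $\sum_{i\in C'}c(i)\le b$, and this collection can be enumerated in polynomial time.

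For each such subset $C'$, I would set $W=C'\cup F$ and ask whether the votes in $W$ can be reset (to arbitrary complete orders) so that $p$ becomes the winner under the prescribed tie-breaking rule, while the votes outside $W$ stay fixed. This is precisely an instance of \textsc{Coalitional-Manipulation} with manipulator set $W$ and truthful profile given by the remaining votes, and it is in \Pshort{} for $k$-approval, Bucklin, and plurality with runoff~\cite{xia2009complexity} (the same fact underlying Lemma~\ref{lem:frugalP}). The algorithm returns \YES{} if and only if the manipulation test succeeds for at least one enumerated $C'$.

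Correctness follows from a two-way argument. If the instance is a \YES{} instance witnessed by a changed set $S$ of vulnerable votes, then the set $C'=S\cap C$ has cost at most $b$ and is therefore enumerated; since $W=C'\cup F\supseteq S$, the corresponding manipulation instance is satisfiable (simply keep the votes in $W\setminus S$ at their original preferences). Conversely, any successful manipulation for an enumerated $C'$ rewrites only votes in $C'\cup F$, all of which are vulnerable and have total cost at most $b$, yielding a valid \textsc{Frugal-\$bribery} solution. The step I expect to be the main obstacle---or at least the one requiring the most care---is the handling of the free votes: a constant budget does \emph{not} bound the number of changed votes, since arbitrarily many price-$0$ votes may be altered for free. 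Folding the entire set $F$ into the manipulator set of each residual manipulation instance is what neutralizes this difficulty, reducing the unbounded free part to a polynomial-time manipulation query while leaving only the bounded costly part to brute-force search.
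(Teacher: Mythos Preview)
Your proposal is correct and follows essentially the same approach as the paper: observe that at most $b$ positive-price vulnerable votes can be changed, enumerate the $O(n^{b})$ possible subsets of such votes, fold all zero-price vulnerable votes into the manipulator set, and solve the residual instance via the polynomial-time \textsc{Coalitional-Manipulation} algorithms for these rules. Your write-up is in fact more careful than the paper's own proof, which compresses the argument into three sentences and dispatches the free-vote issue with a one-line ``without loss of generality'' remark rather than the explicit $W=C'\cup F$ construction and two-way correctness argument you give.
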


\longversion{\begin{proof}
 Let the budget $b$ be a constant. Then, at most $b$ many vulnerable votes whose price is not zero can be changed since the prices are assumed to be in $\mathbb{N}$. Notice that we may assume, 
 without loss of generality, that all the vulnerable votes whose price is zero will be changed. 
 We iterate over all the $O(n^b)$ many possible vulnerable vote changes and we can solve each one 
 in polynomial time since the \textsc{Coalitional-manipulation} problem is in \Pshort{} for these voting rules~\cite{xia2009complexity}.
\end{proof}}

\longversion{We show that the \textsc{Frugal-\$bribery} problem is \NPC{} for a wide class of scoring rules as characterized in the following
result.}\shortversion{ Our next result shows that, the \textsc{Frugal-\$bribery} problem is \NPC{} for a wide class of scoring rules\longversion{ that includes the Borda voting rule}. \Cref{thm:frugalScrNPC} can be proved by a reduction from the X3C problem.}

\begin{theorem}\shortversion{[$\star$]}\label{thm:frugalScrNPC}
 For any positional scoring rule $r$ with score vectors $\{\overrightarrow{s_i} : i\in \mathbb{N}\}$, if there exists a polynomial function $f: \mathbb{N} \longrightarrow \mathbb{N}$ such that, for every $m\in \mathbb{N}, f(m) \ge 2m$ and in the score vector $(\alpha_1, \ldots, \alpha_{f(m)})$, there exists a $m\le \ell\le f(m)-5$ satisfying the following condition:
 \[ \alpha_i - \alpha_{i+1} = \alpha_{i+1} - \alpha_{i+2} > 0, \forall \ell \le i \le \ell+3 \]
 then the \textsc{Frugal-\$bribery} problem is \NPC{} for $r$ even if the price of every vulnerable vote is either $1$ or $\infty$.
\end{theorem}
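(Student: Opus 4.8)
The plan is to reduce from \textsc{X3C}, generalizing the reduction behind Theorem~\ref{thm:frugalBordaNPC} by treating the block of five equal gaps guaranteed by the hypothesis as a ``local Borda'' region. Membership in \NPshort{} is immediate (guess the changed vulnerable votes and verify the winner and the budget in polynomial time). For hardness, let $(U,\{S_1,\dots,S_t\})$ with $|U|=m$ a multiple of $3$ be an X3C instance, fix the number of candidates to $f(m)$, and let $l$ and $\delta>0$ be as in the statement, so that $\alpha_l-\alpha_{l+1}=\dots=\alpha_{l+4}-\alpha_{l+5}=\delta$. The candidate set is $U\uplus\{p,c,z\}\uplus D$, where the dummies $D$ pad the total to $f(m)$; since $f(m)\ge 2m$ there are enough of them. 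The distinguished candidate is $p$ and the tie-breaking rule is ``$p\succ\text{others}$''.

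For each $S_i=\{s_1,s_2,s_3\}$ I introduce one vulnerable vote
\[ v_i:\ p\succ(\text{dummies})\succ\underbrace{c\succ z\succ s_1\succ s_2\succ s_3\succ g}_{\text{positions }l,\dots,l+5}\succ(\text{rest}), \]
with $p$ frozen at the top (so the briber can never raise $p$'s score) and a designated dummy $g$ at position $l+5$. Using Lemma~\ref{score_gen} I add polynomially many further votes, all priced $\infty$, to realise the target scores
\[ s(c)=s(p)+\tfrac{4m}{3}\delta,\qquad s(x)=s(p)-\delta\ \ (x\in U),\qquad s(z)=s(p)-\tfrac{m}{3}\delta, \]
with $g$ and the remaining dummies set \emph{tight}, i.e.\ equal to $s(p)$, and the sacrificial candidates of Lemma~\ref{score_gen} pushed far below. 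Each $v_i$ is priced $1$, the budget is $\tfrac{m}{3}$, and the initial winner is $c$ (the unique candidate above $p$), so every $v_i$ is $p$-vulnerable.

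In the forward direction, given an exact cover $I$ I change $v_i$ for $i\in I$ by cyclically shifting the window block to $z\succ s_1\succ s_2\succ s_3\succ c\succ g$: this lowers $c$ by $4\delta$ and raises each of $z,s_1,s_2,s_3$ by $\delta$. After $\tfrac{m}{3}$ such changes $c$, $z$ and every element reach exactly $s(p)$, so $p$ wins on the tie-break at cost $\tfrac{m}{3}=b$. Conversely, a successful bribery must drop $c$ by $\tfrac{4m}{3}\delta$; the tight blocker $g$ at $l+5$ makes any vote in which $c$ falls below position $l+4$ raise $g$ above $s(p)$, so each changed vote lowers $c$ by at most $4\delta$, forcing at least $\tfrac{m}{3}$ and hence, by the budget, exactly $\tfrac{m}{3}$ changes, each a full push of $c$ to $l+4$. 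A counting argument on the top $l+3$ positions then identifies the intended order-preserving shift as the only admissible full push, and since no element may rise by more than $\delta$ the $\tfrac{m}{3}$ chosen sets must be disjoint and cover $U$.

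The hard part is precisely the tightness of this backward direction: ruling out an \emph{unintended} rearrangement that drops $c$ by $4\delta$ while avoiding the forced increment of $s_1,s_2,s_3$ (for example by importing other candidates into the high window positions and dumping the set-elements below them). This is exactly what the equal-gap window is designed to buy: because every dummy is tight and lies either above the window or below position $l+4$, lifting any of them into positions $l,\dots,l+3$ pushes it above $s(p)$, so a conservation argument on the $l+3$ highest positions leaves the order-preserving shift of $z,s_1,s_2,s_3$ as the only option. I expect the bookkeeping that excludes these deviations (and in particular that the low sacrificial dummies can never be maneuvered into an admissible absorber) to be the most delicate step. The boundary case $l=1$ is handled by placing $p$ at position $1$, the top of the window, and using the five positions $2,\dots,6$ for $c,z,s_1,s_2,s_3$. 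Finally, all target scores are polynomial in $m$, so Lemma~\ref{score_gen} contributes only polynomially many votes and the whole reduction runs in polynomial time.
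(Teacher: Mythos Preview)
Your approach is essentially the same as the paper's: both reduce from \textsc{X3C}, place the target candidate and the three set-elements inside the five-gap window, use Lemma~\ref{score_gen} to realise the required score offsets, and argue in the backward direction via a conservation/counting argument that the chosen sets must be an exact cover. The paper's construction is somewhat leaner: it uses a single ``winner'' candidate $a$ (your $c$) together with a block $Q$ of tight candidates pinned to the positions below the window, and does \emph{not} introduce your counter $z$; the budget alone is used to bound the number of changed votes. Your version instead mirrors the Borda reduction of Theorem~\ref{thm:frugalBordaNPC}, carrying along $z$ and a single designated blocker $g$.

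One point deserves attention. You correctly isolate the delicate step as preventing the \emph{sacrificial} candidate(s) introduced by Lemma~\ref{score_gen} (which sit strictly below $s(p)$ and hence have slack) from acting as ``absorbers'' that would break the conservation inequality. The paper resolves exactly this by placing its single sacrificial dummy $d$ at position~$2$, immediately after $p$, in every vote $v_i$; since $p$ stays at the top w.l.o.g., $d$ can only move \emph{down} in any changed vote, so $\Delta_d\le 0$ and the conservation bound becomes tight. Your write-up leaves the sacrificial dummies in the ``rest'' block below the window, where they \emph{can} be lifted into positions $l{+}3$ or $l{+}4$ and soak up score without exceeding $s(p)$---this is a genuine gap in your sketch, not merely bookkeeping. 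The fix is precisely the paper's trick: put the (single) sacrificial dummy from Lemma~\ref{score_gen} just below $p$ in every $v_i$, so that every candidate with slack is already at a position from which it cannot gain. With that adjustment your argument goes through.
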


\longversion{
\begin{proof}
 The problem is clearly in \NPshort{}. To show \NPshort{}-hardness, we reduce an arbitrary instance of X3C to \textsc{Frugal-\$bribery}. 
 Let $(U, \{S_1, \dots, S_t\})$ be an instance of X3C. We define a \textsc{Frugal-\$bribery} instance as follows. Let us consider the score vector $(\alpha_1, \ldots, \alpha_{f(|U|)}).$ Since the scoring rules remain unchanged if we multiply every $\alpha_i$ by any constant $\lambda>0$ and/or add any constant $\mu$, 
 we can assume the following without loss of generality.
 
 \[ \alpha_i - \alpha_{i+1} = \alpha_{i+1} - \alpha_{i+2} = 1, \forall \ell \le i \le \ell+3 \]
 
 The candidate set is:
 
 \[\mathcal{C} = U \uplus Q \uplus \{p, a, d\},\text{ where } |Q| = f(|U|)-|U|-4 \text{ and } Q = \{q_1, q_2, \ldots, q_{|Q|}\}\]
 
 Let us fix any arbitrary order $\succ_f$ among the candidates in $U \uplus Q.$ For any subset $A\subset U \uplus Q,$ let $\overrightarrow{A}$ be the ordering among the candidates in $A$ as defined in $\succ_f.$ For each $S_i = \{x,y,z\}, 1\le i\le t$, we add a vote $v_i$ as follows.
 
 \[ v_i : p \succ d \succ \overrightarrow{\text{others}} \succ \underbrace{a \succ x\succ y\succ z \succ q_1 \suc q_2 \suc \cdots \suc q_{f(|U|)-\el-4}}_{l \text{ candidates}} \]
 
 By \Cref{score_gen}, we can add $poly(|U|)$ many additional votes to ensure the following scores (denoted by $s(\cdot)$) in the resulting profile (including the votes $v_i, i\in[t]$). Note that the proof of \Cref{score_gen} by Baumeister et al.~\cite{baumeister2011computational} also works for the normalization of $\alpha$ defined in the beginning of the proof.
 
 \begin{itemize}
 	\item $s(d) < s(p)$
 	\item $s(x) = s(p) - 2, \forall x\in U$
 	\item $s(a) = s(p) + \nfrac{|U|}{3} - 1$
 	\item $s(q) = s(p) - 1, \forall q\in Q$
 \end{itemize}
 
% \[ s(d) < s(p), s(x) = s(p) - 2, \forall x\in U,\]
% \[ s(a) = s(p) + \nfrac{|U|}{3} - 1 , s(q) = s(p) - 1, \forall q\in Q \]
 
 The tie-breaking rule is ``$\cdots \succ p$.'' The candidate $a$ wins. The distinguished candidate is $p$. The price of every $v_i$ is $1$ and the price of every other vulnerable vote is $\infty$. The budget is $\nfrac{|U|}{3}$. We claim that the two instances are equivalent.
 
 In the forward direction, there exists an index set $I\subseteq [t], |I|=\nfrac{|U|}{3},$ such that $\uplus_{i\in I} S_i = U$. We replace the votes $v_i$ with $v_i^{\prime}, i\in I,$ which are defined as follows.
 
 \[ v_i^{\prime} : p \succ d \succ \overrightarrow{\text{others}} \succ x\succ y\succ z \succ a \succ q_1 \suc q_2 \suc \cdots \suc q_{f(|U|)-\el-4} \]
 
 This makes the score of $p$ at least one more than the score of every other candidate and thus $p$ wins. 
 
 To prove the result in the other direction, let us suppose that the \textsc{Frugal-\$bribery} instance is a \YES{} instance. Then there exists $\VV\subset\{v_i : 1\le i\le t\}$ with $|\VV|=\nfrac{|U|}{3}$ such that no vote in $\{v_i : 1\le i\le t\}\setminus\VV$ has been changed by the briber. Let the vote that replaces $v\in\VV$ be $v^\pr$ and let $\VV^\pr = \{v^\pr: v\in\VV\}.$ Let the resulting profile be $\PP^\pr.$ We first claim that the candidate $q_{f(|U|)-\el-4}$ is placed at the last position of every $v^\pr\in\VV^\pr.$ Indeed, otherwise the score of the candidate $q_{\el-4}$ is not less than the score of $p$ in $\PP^\pr$ which contradicts our assumption that the candidate $p$ wins in $\PP^\pr$ since the tie-breaking rule is ``$\cdots \succ p$.'' Given $q_{f(|U|)-\el-4}$ is placed at the last position of every $v^\pr\in\VV^\pr,$ we observe, by the same argument applied for the candidate $q_{f(|U|)-\el-5}$, that the candidate $q_{f(|U|)-\el-5}$ is placed in the second last position of every $v^\pr\in\VV^\pr.$ Arguing similarly all the way to the candidate $q_1$ we observe that the last $(f(|U|)-\el-4)$ positions of every $v^\pr\in\VV^\pr$ will be $q_1 \suc q_2 \suc \cdots \suc q_{f(|U|)-\el-4}.$ Since $s(p) = s(a) - \nfrac{|U|}{3} + 1$ and the tie-breaking rule is ``$\cdots \succ p,$'' the candidate $a$ must be placed at the $(l+4)^{th}$ position in every $v^\pr\in\VV^\pr.$ Hence, for every $i\in[t]$ such that $v_i^\pr\in\VV^\pr,$ if $S_i=\{x,y,z\},$ then the scores of the candidates $x, y,$ and $z$ increase by at least $1$ each. We claim that $\cup_{i:v_i^\pr\in\VV^\pr} S_i = U.$ If not, then there must exist a candidate $x\in U$ whose score has increased by at least $2$ contradicting the fact that $p$ wins in $\PP^\pr.$
\end{proof}
}

 For the sake of concreteness, an example of a function $f$, stated in \Cref{thm:frugalScrNPC}, that works for the Borda voting rule is $f(m)=2m$. \longversion{\Cref{thm:frugalScrNPC} shows that the \textsc{Frugal-\$bribery} problem is intractable for the Borda voting rule. However, the following theorem shows the intractability of the \textsc{Uniform-frugal-\$bribery} problem for the Borda voting rule, even in a very restricted setting.}\shortversion{\Cref{thm:uniformBordaNPC} below shows the intractability of the \textsc{Uniform-frugal-\$bribery} problem for the Borda voting rule, even in a very restricted setting.} 
\Cref{thm:uniformBordaNPC} below is proved by a reduction from the \textsc{Coalition manipulation} problem for the Borda voting rule for two manipulators which is known to be \NP{}-complete~\cite{betzler2011unweighted,davies2011complexity}.

\begin{theorem}\shortversion{[$\star$]}\label{thm:uniformBordaNPC}
 The \textsc{Uniform-frugal-\$bribery} problem is \NPC{} for the Borda voting rule, even when every vulnerable vote has a price of $1$ and the budget is $2$.
\end{theorem}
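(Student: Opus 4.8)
The plan is to prove \NPshort{}-hardness by a reduction from the \textsc{Coalitional-Manipulation} problem for the Borda voting rule with \emph{exactly two} manipulators, which is known to be \NPCshort{}. Membership in \NPshort{} is immediate: a certificate consists of the (at most two) vulnerable votes that are changed together with their new rankings, and Borda winner determination is polynomial. The conceptual heart of the reduction is to simulate the two free manipulator ballots of the manipulation instance by exactly two vulnerable votes that the frugal briber is allowed to change, each at unit price, so that a budget of $2$ corresponds precisely to re-casting both manipulator ballots.

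Concretely, starting from a two-manipulator Borda instance on a candidate set $C$ with distinguished candidate $p$, I would build a \textsc{uniform-Frugal-\$bribery} instance on $C \uplus \{w\} \uplus D$, where $w$ is a fresh ``current winner'' candidate and $D$ is a pool of dummy candidates. The non-vulnerable part of the profile is produced by Lemma~\ref{score_gen}, tuned so that the Borda scores restricted to $C$ reproduce the truthful scores of the manipulation instance (up to a common additive shift), every dummy in $D$ is a hopeless loser, and $w$ is the current winner leading $p$ only narrowly. I would then add exactly two vulnerable votes $u_1, u_2$ in which $p$ sits just above $w$ near the bottom, so that both are $p$-vulnerable while $p$ is kept away from the top. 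Changing $u_i$ to a ballot of the form ``$p \succ (\text{an arrangement of the remaining candidates}) \succ w$'' raises $p$'s score to its maximum, sends $w$ to the bottom, and distributes the remaining Borda points among $C \setminus \{p\}$ exactly as a Borda manipulator would. With the tie-breaking rule ``$\cdots \succ p$'' forcing $p$ to win uniquely, unit prices, and budget $2$, the candidate $p$ can be made to win by changing $u_1$ and $u_2$ if and only if two manipulators can make $p$ the unique Borda winner, which gives the equivalence.

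The main obstacle, and the step needing the most care, is the \emph{control of the vulnerability structure} in tension with the winnability of $p$. For $u_1$ and $u_2$ to be the only useful ballots, $w$ must beat $p$ in essentially every other vote, which pushes $w$'s fixed score up; yet since two vote-changes can alter the Borda scores of $w$ and of $p$ by at most twice the top Borda value, the fixed profile may let $w$ lead $p$ by only $O(N)$, where $N$ is the total number of candidates, for otherwise $p$ could never overtake $w$ and every instance would be a trivial \NO{} instance. Reconciling these requirements forces the number of votes in which $w$ beats $p$ to be $O(N)$, so I would absorb the (possibly large) score spreads of the manipulation instance by enlarging the candidate set with the dummy pool $D$ rather than by adding many votes. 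A related subtlety is that Lemma~\ref{score_gen} only prescribes scores, not the internal ordering of each generated ballot; I would therefore either refine its block construction so that $w$ is always ranked above $p$, or, more flexibly, arrange that any generated ballot placing $p$ above $w$ already has $p$ at the top and $w$ at the bottom, so that changing it can never help $p$ and the briber's budget is provably best spent on $u_1$ and $u_2$.
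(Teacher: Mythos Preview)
Your reduction source --- two-manipulator Borda \textsc{Coalitional-Manipulation} --- and the overall shape of the argument match the paper exactly. The divergence is in how you force the briber's two purchases onto the two surrogate-manipulator votes. You attack this by controlling the \emph{vulnerability structure}: introduce a fresh winner $w$ and a dummy pool $D$, and try to arrange that $w$ outranks $p$ in every block generated by Lemma~\ref{score_gen}; you correctly flag the resulting tension between $w$'s necessarily small lead and $w$'s dominance in many votes, and your proposed fixes (inflating $|D|$, or re-engineering the internal block orderings) would need a strengthened version of Lemma~\ref{score_gen} that controls relative positions inside each vote, not just aggregate scores --- something that lemma does not provide as stated. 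The paper bypasses this entirely. It adds only two new candidates, a dummy $d$ and a \emph{guard} $q$, introduces no fresh winner and no dummy pool, and makes no attempt to restrict which votes are vulnerable. Instead, the two surrogate votes $\nu_1,\nu_2$ place $p$ at the second-to-last position with $q$ immediately below, every other added vote places $p$ in a middle position with $q$ never immediately following $p$, and the guard's score is pinned at $s'(q)=s'(p)-2m+1$. The argument is then that only the pair $\nu_1,\nu_2$ allows the briber to lift $p$ by enough while respecting the $q$-constraint; any other vulnerable votes, even if they exist, are useless to the briber. This guard-candidate trick is the idea you are missing, and it is substantially lighter than the vulnerability engineering you outline.
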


\longversion{ \begin{proof}
 The problem is clearly in \NPshort{}. To show \NPshort{}-hardness, we reduce an arbitrary instance of the \textsc{Coalitional-manipulation} problem for the Borda voting rule with two manipulators to an instance of the \textsc{Uniform-frugal-\$bribery} problem for the Borda voting rule. Let $(C, \succ^t, 2, p)$ be an arbitrary instance of the \textsc{Coalitional-manipulation} problem for the Borda voting rule and $|C|=m.$ The corresponding \textsc{Frugal-\$bribery} instance is as follows. The candidate set is:
 
 \[C^{\prime} = C \uplus \{d,q\}\]
 
 For each vote $v_i \in \succ^t$, we add a vote $v_i^{\prime}$ as follows.
 
 \[v_i^{\prime} : v_i \succ d \succ q\]
 
 Let $\overrightarrow{C\setminus \{p\}}$ is an arbitrary but fixed order of the candidates in $C\setminus \{p\}$. Corresponding to the two manipulators', we add two more votes $\nu_1$ and $\nu_2$ as follows. 
 
 \[\nu_1, \nu_2 : \overrightarrow{C\setminus \{p\}} \succ d \succ p \succ q\]

 Let $s(\cdot)$ and $s^{\prime}(\cdot)$ be the score functions for the \textsc{Coalitional-manipulation} and the \textsc{Uniform-frugal-\$bribery} instances respectively. We assume that $s(x) < s(p) + 2(m-1)$ for every $x\in C\setminus\{p\}$ since otherwise the \textsc{Coalitional-manipulation} instance is a trivial \NO instance. We now add more votes to ensure following score differences in the resulting \textsc{Uniform-frugal-\$bribery} instance. 
 
 $$s^{\prime}(p) = \lambda + s(p) - 2, s^{\prime}(x) = \lambda + s(x) \text{ for every } x\in C\setminus\{p\},$$ 
 
 $$s^{\prime}(q) = s^{\prime}(p) + 2m - 1, s^{\prime}(p) > s^{\prime}(d) + 2m \text{ for some } \lambda \in \mathbb{Z}$$ 
 
 This will be achieved as follows. For any two arbitrary candidates $a$ and $b$, the following two votes increase the score of $a$ by one more than the rest of the candidates except $b$ whose score increases by one less. This construction has been used before~\cite{xia2010scheduling,davies2011complexity}.
 
 \[ a \succ b \succ \overrightarrow{C\setminus \{a,b\}} \]
 \[ \overleftarrow{C\setminus \{a,b\}} \succ a \succ b \]
 
 Also, we can ensure that candidate $p$ is always in $(\nfrac{m-1}{2}, \nfrac{m+1}{2})$ positions and the candidate $q$ never {\em immediately} follows $p$ in these new votes. The tie-breaking rule is ``$q \suc \text{others} \succ p$.'' The candidate $q$ is the winner since $s^{\prime}(q) = s^{\prime}(p) + 2m - 1 \ge s^\pr(x)$ for every $x\in C^\pr\setminus\{p,q\}.$ The distinguished candidate is $p$. The price of every vulnerable vote is one and the budget is two. We claim that the two instances are equivalent. 
 
 In the forward direction, suppose the \textsc{Coalitional-manipulation} instance is a \YES{} instance. Let $u_1, u_2$ be the manipulators' votes that make $p$ win. In the \textsc{Frugal-\$bribery} instance, we replace $\nu_i$ by $\nu_i^{\prime} : p \succ d \succ (u_i\setminus\{p\}) \succ q$ for $i=1,2.$ This makes $p$ win the election. 
 
 In the reverse direction, recall that in all the vulnerable votes except $\nu_1$ and $\nu_2$, the candidate $q$ never {\em immediately} follows candidate $p$. Therefore, changing any of these votes can never make $p$ win the election since $s^{\prime}(q) = s^{\prime}(p) - 2m + 1$ and the budget is two. Hence, the only way $p$ can win the election, if at all possible, is by changing the votes $\nu_1$ and $\nu_2$. Let a vote $\nu_i^{\prime}$ replaces $\nu_i$ for $i=1,2$. We can assume, without loss of generality, that $p$ and $d$ are at the first and the second positions respectively in both $\nu_1^{\prime}$ and $\nu_2^{\prime}$. Let $u_i$ be the order $\nu_i^{\prime}$ restricted only to the candidates in $C$. This makes $p$ the unique winner of the \textsc{Coalitional-manipulation} instance since $s^{\prime}(p) = \lambda + s(p) - 2, s^{\prime}(x) = \lambda + s(x)$ for every $x\in C$ and the tie-breaking rule is ``$q \suc \text{others} \succ p$.''
\end{proof}}

\section{Results for Weighted Elections}\label{sec:wt}

\aaaiversion{We divide Observation $2$ of short version into Observation $4$ and $5$ below.}
Now we turn our attention to weighted elections. As before, we begin with some easy observations that follow from known results. \shortversion{
The first part of \Cref{lem:wfrugal} follows from the literature on the \textsc{Coalitional-manipulation} problem and~\Cref{prop:man_bri_con} whereas the second part of \Cref{lem:wfrugal} follows from the proof of Theorem $6$ in~\cite{conitzer2007elections}.

\begin{observation}\shortversion{[$\star$]}\label{lem:wfrugal} 
 %(i) The \textsc{Frugal-bribery} problem is in \Pshort{} for the maximin and the Copeland voting rules for three candidates. (ii)  The \textsc{Frugal-bribery} problem is \NPC{} for any scoring rule except plurality for three candidates.
 \begin{itemize}[noitemsep,leftmargin=*]
  \item The \textsc{Frugal-bribery} problem is in \Pshort{} for the maximin and Copeland voting rules for three candidates.
  \item The \textsc{Frugal-bribery} problem is \NPC{} for any scoring rule except plurality for three candidates.
 \end{itemize}
\end{observation}
}

\longversion{

\begin{observation}\label{lem:wfrugalEasy}
 The \textsc{Frugal-bribery} problem is in \Pshort{} for the maximin and the Copeland voting rules for three candidates.
\end{observation}

\begin{proof}
 When we have $3$ candidates, the \textsc{Coalitional Manipulation} problem is in \Pshort{} for the maximin and the Copeland voting rules~\cite{conitzer2007elections}. Hence, the result follows from \Cref{prop:man_bri_con}.
\end{proof}

Using the proof of Theorem 6 in Conitzer et al.~\cite{conitzer2007elections}, we can obtain the following.

\begin{observation}\label{lem:wfrugalScr}
 Assume we have only $3$ candidates. Then the \textsc{Frugal-bribery} problem is \NPC{} for every scoring rule except plurality.
\end{observation}
}

\begin{theorem}\shortversion{[$\star$]}\label{thm:wfrugalP} 
 The \textsc{Frugal-bribery} problem is in \Pshort{} for the plurality voting rule.
\end{theorem} 

\longversion{ \begin{proof}
 Let $p$ be the distinguished candidate of the campaigner. We greedily replace every vulnerable vote by the vote $p \succ others$. The correctness follows from the fact that plurality only accounts for candidates in the top position, and the strategy described is therefore the best possible for candidate $p$. 
\end{proof}}

Our hardness results in this section are based on the \textsc{Partition} problem, which is known to be \NPC{} \cite{garey1979computers}, and is defined as follows.

\defproblem{\textsc{Partition}}{A finite multi-set $W$ of positive integers with $\sum_{w\in W} w = 2K$.}{Does there exist a subset $W^{\prime} \subset W$ such that $\sum_{w\in W^{\prime}} w = K$?}

% \begin{definition}(\textsc{Partition} Problem)\\
%  Given a finite multi-set $W$ of positive integers with $\sum_{w\in W} w = 2K$, 
%  does there exist a subset $W^{\prime} \subset W$ such that $\sum_{w\in W^{\prime}} w = K$?
%  An arbitrary instance of \textsc{Partition} is denoted by $(W,2K)$. 
% \end{definition}

 An arbitrary instance of \textsc{Partition} is denoted by $(W,2K)$.
 We define another problem which we call $\frac{1}{4}$-\textsc{Partition} as below. \longversion{We prove that $\frac{1}{4}$-\textsc{Partition} is also \NPC{}}\shortversion{The $\frac{1}{4}$-\textsc{Partition} problem can be proved to be \NPC{} by reducing from the \textsc{Partition} problem}.
We will use this fact in the proof of \Cref{thm:stv_wt}.

\defproblem{$\frac{1}{4}$-\textsc{Partition}}{A finite multi-set $W$ of positive integers with $\sum_{w\in W} w = 4K$.}{Does there exist a subset $W^{\prime} \subset W$ such that $\sum_{w\in W^{\prime}} w = K$?}

% \begin{definition}($\frac{1}{4}$-\textsc{Partition} Problem)\\
%  Given a finite multi-set $W$ of positive integers with $\sum_{w\in W} w = 4K$, 
%  does there exist a subset $W^{\prime} \subset W$ such that $\sum_{w\in W^{\prime}} w = K$?
%  An arbitrary instance of $\frac{1}{4}$-\textsc{Partition} is denoted by $(W,4K)$. 
% \end{definition}

 An arbitrary instance of $\frac{1}{4}$-\textsc{Partition} is denoted by $(W,4K)$. 
\longversion{\begin{lemma}\label{lem:part}
$\frac{1}{4}$-\textsc{Partition} problem is \NPC{}.
\end{lemma}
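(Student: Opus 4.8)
The plan is to establish membership in \NPshort{} trivially and then prove \NPshort{}-hardness by a polynomial-time many-to-one reduction from the classical \textsc{Partition} problem, which the excerpt has already recalled to be \NPCshort{}. Membership is immediate: a candidate subset $W^\prime$ is a polynomial-size certificate whose total can be computed and compared against $K$ in polynomial time.

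For the hardness reduction, the key observation is that \textsc{Partition} asks for a subset summing to \emph{half} the total $2K$, whereas $\frac14$-\textsc{Partition} asks for a subset summing to a \emph{quarter} of the total. I would therefore inflate the total from $2K$ to $4K$ by adding a single ``padding'' element that is large enough to be excluded from every target subset. Concretely, given a \textsc{Partition} instance $(W, 2K)$, I would output the $\frac14$-\textsc{Partition} instance $(W \uplus \{2K\}, 4K)$, i.e. I add one new element of value $2K$. The new total is $2K + 2K = 4K$, so the desired quarter-sum is exactly $K$, which matches the half-sum target of the original instance. The reduction plainly runs in linear time.

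The correctness argument splits into two directions. In the forward direction, any $W^\prime \subseteq W$ with $\sum_{w \in W^\prime} w = K$ is also a subset of $W \uplus \{2K\}$ with the same sum, witnessing a \YES{} for $\frac14$-\textsc{Partition}. The backward direction is where the padding element does its work: any subset $S \subseteq W \uplus \{2K\}$ summing to $K$ cannot contain the new element of value $2K$, since $2K > K$ (the weights are positive integers, so $K \ge 1$); hence $S \subseteq W$ and it witnesses a \YES{} for the original \textsc{Partition} instance.

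I expect no genuine obstacle here. The only point needing a touch of care is ensuring that the single padding element is \emph{strictly} larger than the quarter-target $K$, so that it is forced out of every candidate solution in the backward direction; this is exactly why the value $2K$ (rather than $K$) is chosen for it. If one wishes to be fully rigorous about the strict inequality, the degenerate case $K = 0$ (equivalently $W = \emptyset$) can be disposed of separately as a trivial \YES{} instance.
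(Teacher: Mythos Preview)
Your proposal is correct and matches the paper's approach almost exactly: both reduce from \textsc{Partition} by adjoining a single padding element of value $2K$, yielding total $4K$ and quarter-target $K$. The only cosmetic difference is that the paper first disposes of the case $2K \in W$ as a trivial \NO{} instance before taking the ordinary union $W \cup \{2K\}$, whereas you use the disjoint (multiset) union and argue directly that the new element $2K > K$ cannot lie in any subset summing to $K$.
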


\begin{proof}
 The problem is clearly in \NPshort{}. To show \NPshort{}-hardness, we reduce the \textsc{Partition} problem to it. 
 Let $(W,2K)$ be an arbitrary instance of the \textsc{Partition} problem. We can assume, without loss of generality, that $2K \notin W$, since 
 otherwise the instance is trivially a \textit{no} instance. 
 The corresponding $\frac{1}{4}$-\textsc{Partition} problem instance is defined by $(W_1,4K)$, where $W_1 = W \cup \{2K\}$. 
 We claim that the two instances are equivalent. Suppose the \textsc{Partition} instance is a \YES{} instance and thus 
 there exists a set $W^{\prime}\subset W$ such that $\sum_{w\in W^{\prime}} w = K$. This $W^{\prime}$ gives a solution to 
 the $\frac{1}{4}$-\textsc{Partition} instance. To prove the result in the other direction, 
 suppose there is a set $W^{\prime}\subset W_1$ such that $\sum_{w\in W^{\prime}} w = K$. 
 This $W^{\prime}$ gives a solution to the \textsc{Partition} problem instance since $2K \notin W^{\prime}$.
\end{proof}}

Our hardness reductions in this section have the following overall approach. We first introduce vulnerable votes corresponding to the numbers in the instance of \textsc{Partition}, and the weights and prices of these votes are tightly correlated with the corresponding numbers in the \textsc{Partition} instance. We then introduce auxiliary votes, that are typically not vulnerable, but are crafted in such a way that the distinguished candidate lags behind the current winner --- and the differential can only be compensated by changing \textit{exactly} half the weight of the vulnerable votes. In the case of \textsc{Frugal-\$bribery}, this is relatively easy to achieve: the score difference can be used to create the requirement that the total weight of the changed votes is at least $K$, while the budget can be used to enforce that the total cost of the changed votes is at most $K$, which leads us naturally to the desired partition. We see this in play in~\Cref{thm:wfrugalPluNPC}, for the plurality voting rule. For all the other rules, since we don't have costs, a more delicate argument is required to enforce the two-sided dynamic of the affected votes.

\longversion{In the rest of this section, we present the hardness results in weighted elections for the following voting rules: plurality, maximin, STV, Copeland$^\alpha$, and simplified Bucklin. For plurality, recall that the \textsc{Frugal-bribery} problem is in \Pshort{}, and we will show that \textsc{Frugal-\$bribery} is \NPC{}. For all the other rules, we will establish that even \textsc{Frugal-bribery} is \NPC{}.} \shortversion{ The following result can be proved by exhibiting a reduction from the \textsc{Partition} problem.}

\begin{theorem}\shortversion{[$\star$]}\label{thm:wfrugalPluNPC}
 The \textsc{Frugal-\$bribery} problem is \NPC{} for the plurality voting rule for three candidates.
\end{theorem}

\longversion{\begin{proof}
 The problem is clearly in \NPshort{}. We reduce an arbitrary instance of \textsc{Partition} to an instance of \textsc{Frugal-\$bribery} for the plurality voting rule. Let $(W,2K),$ with $W=\{w_1, \ldots, w_n\}$ and $\sum_{i=1}^n w_i=2K$, be an arbitrary instance of the \textsc{Partition} problem. In the reduced instance, we introduce three candidates, namely, $p, a,$ and $b$. The distinguished candidate is $p$. We will now add votes in such a way that makes $b$ win the election.  

 For every $i\in [n]$, we have one vote $a\succ p\succ b$ of both weight and price $w_i$. We have two votes $b\succ p\succ a$ of weight $3K$ each (we do not need to define the price of this vote since it is non-vulnerable). We also have one vote $p\succ a\succ b$ of both weight and price $2K+1$. This finishes the description of the votes. We observe that candidate $b$ wins the plurality election with plurality score $3K$. The tie-breaking rule is ``$a\succ b\succ p$.'' We define the budget to be $K$. We claim that the two instances are equivalent. 
 
 In the forward direction, suppose there exists a $W^{\prime} \subset W$ such that $\sum_{w\in W^{\prime}} w = K$. We change the votes corresponding to the weights in $W^{\prime}$ to $p\succ a\succ b$. This makes $p$ win the election with a plurality score of $3K+1$. 
 
 To prove the other direction, for $p$ to win, its score must increase by at least $K$. Also, the prices ensure that $p$'s score can increase by at most $K$. Hence, $p$'s score must increase by exactly by $K$ and the only way to achieve this is to increase its score by changing the votes corresponding to the weights in $W$. Thus, $p$ can win only if there exists a $W^{\prime} \subset W$ such that $\sum_{w\in W^{\prime}} w = K$.
\end{proof}}

 Next we show the hardness result for the maximin\longversion{ voting} rule.
 
\begin{theorem}\label{thm:wfrugalMaxmin}
 The \textsc{Frugal-bribery} problem is \NPC{} for the maximin voting rule for $4$ candidates.
\end{theorem}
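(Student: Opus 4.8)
The plan is to reduce from \textsc{Partition}, mirroring the structure of Theorem~\ref{thm:wfrugalPluNPC} but exploiting the cyclic behaviour of maximin on three candidates. Membership in \NPshort{} is immediate: a certificate is the reassignment of orders to the vulnerable votes, and the maximin winner of the resulting weighted profile can be computed in polynomial time.

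For hardness, given a \textsc{Partition} instance $(W,2K)$ with $W=\{w_1,\dots,w_n\}$, I would build an election on four candidates $p,a,b,c$. The candidate $c$ is engineered to be the current winner, so that the votes I associate with the items are exactly the vulnerable ones: for each item $i$ I introduce one vote of weight $w_i$ that ranks $p$ above $c$ (hence vulnerable), giving total vulnerable weight $2K$. A block of heavy \emph{non-vulnerable} votes (each ranking $c$ above $p$, realized in the spirit of McGarvey's theorem) is added to pin the base weighted-majority margins $D_E(\cdot,\cdot)$ to prescribed values. The tie-breaking rule is ``$p\succ\text{others}$''. With a small threshold $s$, the base margins are chosen so that $D_E(p,c)=-s$ \emph{even after} every vulnerable vote places $p$ above $c$; thus $p$'s maximin score is permanently capped at $-s$, and $p$ can win only if each of $a,b,c$ is driven to maximin at most $-s$.

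The intended correspondence is that a subset $W'$ of total weight $K$ corresponds to reordering the votes of $W'$ one way and the remaining votes (also of total weight $K$) the other way, so that below $p$ the candidates $a,b,c$ end up in a balanced cyclic configuration $D_E(c,a)=D_E(a,b)=D_E(b,c)=s$. There each of $a,b,c$ loses by exactly $s$ to the candidate beating it in the cycle, so all four maximin scores equal $-s$ and $p$ wins on the tie-break. The forward direction is then a direct computation: a valid split yields this configuration and makes $p$ a winner.

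The main obstacle is the reverse direction, i.e. showing that \emph{no} reordering makes $p$ win unless the weights split as $K/K$. The danger is the free power the briber enjoys in \textsc{Frugal-bribery}: promoting $p$ to the top of every vulnerable vote would, in most rules, trivially dominate $a,b,c$. I would neutralize this with the two structural choices above. Because $D_E(p,c)$ can never exceed $-s$, promoting $p$ never lifts its maximin past $-s$; and by capping $D_E(p,a),D_E(p,b)$ (via very negative base margins) so that $p$ cannot beat $a$ or $b$ by as much as $s$ even using all vulnerable weight, neither $a$ nor $b$ can be ``killed'' through its comparison with $p$. Hence each of $a,b,c$ must acquire a loss of at least $s$ inside the triangle $\{a,b,c\}$, and since a single linear order cannot increase all three cyclic margins $D_E(c,a),D_E(a,b),D_E(b,c)$ at once, a counting argument on how much vulnerable weight is placed in ``cyclic'' versus ``anti-cyclic'' suborders forces the split to be $K/K$ — i.e.\ a \textsc{Partition} solution. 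The bulk of the write-up lies in fixing every base margin to an exact value and verifying that the remaining freedoms (the placement of $p$ relative to $a,b$ and the anti-cyclic suborders) can only hurt $p$, so that the balanced split is the unique route to victory.
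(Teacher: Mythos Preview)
Your plan is essentially the paper's own approach: reduce from \textsc{Partition}, cap $p$'s maximin score at a fixed value via its pairwise comparison with one opponent, and then force a balanced cycle on $\{a,b,c\}$ whose margins can be achieved only by a $K/K$ split of the vulnerable weight. The paper carries this out very concretely: it adds just three non-vulnerable votes $c\succ a\succ b\succ p$, $b\succ c\succ a\succ p$, $a\succ c\succ b\succ p$, each of weight $K$, with the $T$-votes $p\succ a\succ b\succ c$ of weights $w_i$; the current winner is $a$ (not $c$), and $p$'s maximin is pinned at $-K$ by all three comparisons simultaneously. Your McGarvey-style block and the parameter $s$ are a more abstract packaging of the same idea.

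One point where your sketch is looser than the paper and would need tightening: the ``counting argument on cyclic versus anti-cyclic suborders'' controls only the \emph{sum} $D(c,a)+D(a,b)+D(b,c)$, which by itself does not force each term to equal $s$ and hence does not directly yield the $K/K$ split. The paper's reverse direction instead first argues that the cycle must go in a specific direction (it rules out $c$ defeating $b$ by showing that this would force all $T$-votes to be $p\succ a\succ c\succ b$, destroying the cycle), then normalises the remaining $T$-votes to two types $p\succ a\succ b\succ c$ and $p\succ b\succ c\succ a$, and finally reads off $x=K$ from the two individual constraints $D(c,a)\ge K$ and $D(a,b)\ge K$. Your write-up will need a comparable case analysis rather than a pure counting argument. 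Also, ``very negative base margins'' for $D(p,a),D(p,b)$ is the wrong phrasing: you need those margins to land in $[-s,s)$ once the vulnerable weight is placed optimally, or $p$'s maximin drops below $-s$ and the target becomes unreachable; the paper sidesteps this by making all three margins $D(p,\cdot)$ equal to $-K$ at once.
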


\begin{proof}
 The problem is clearly in \NPshort{}. We reduce an arbitrary instance of \textsc{Partition} to an instance of \textsc{Frugal-bribery} for the maximin voting rule. Let $(W,2K),$ with $W=\{w_1, \ldots, w_n\}$ and $\sum_{i=1}^n w_i = 2K$, be an arbitrary instance of the \textsc{Partition} problem. The candidates in our \textsc{Frugal-bribery} instance are $p, a, b,$ and $c$. For every $i\in [n]$, we have one vote $p\succ a\succ b\succ c$ of weight $w_i$. There is one vote $c\succ a\succ b\succ p$, one vote $b\succ c\succ a\succ p$, and one vote $a\succ c\succ b\succ p$ each of weight $K$. The election is summarized in~\Cref{tab:maximin}. This finishes the description of the votes. The weighted majority graph induced by these votes are shown in \Cref{fig:maximin_initial}. We observe that candidate $a$ wins since it is the Condorcet winner of the election. The tie-breaking rule is ``$p\succ a\succ b\succ c$.'' The distinguished candidate is $p$. Let $T$ denote the set of votes corresponding to the weights in $W$ and the rest of the votes $S$. Notice that only the votes in $T$ are vulnerable. We claim that the two instances are equivalent. 
 
\begin{table}[H]
\centering
\vspace{10pt}
\begin{minipage}{.5\textwidth}
\begin{tabular}{|c|c|c|c|c|}
\hline
Pairwise Outcomes & $a$                          & $b$                                                 & $c$                                                 & $p$                                                 \\ \hline
$a$               & ---                          & $4K$                                                & \cellcolor[HTML]{9AFF99}{\color[HTML]{000000} $3K$} & \cellcolor[HTML]{9AFF99}{\color[HTML]{000000} $3K$} \\ \hline
$b$               & \cellcolor[HTML]{9AFF99}$K$  & ---                                                 & $3K$                                                & $3K$ \\ \hline
$c$               & \cellcolor[HTML]{9AFF99}$2K$ & \cellcolor[HTML]{9AFF99}{\color[HTML]{000000} $2K$} & ---                                                 & $3K$ \\ \hline
$p$               & \cellcolor[HTML]{9AFF99}$2K$ & \cellcolor[HTML]{9AFF99}$2K$                        & \cellcolor[HTML]{9AFF99}$2K$                        & ---                                                 \\ \hline
\end{tabular}

\end{minipage}\hfill
\begin{minipage}{.5\textwidth}
\begin{tabular}{|c|c|c|c|c|}
\hline
Pairwise Outcomes & $a$                          & $b$                                                 & $c$                                                 & $p$                                                 \\ \hline
$a$               & ---                          & $3K$                                                & \cellcolor[HTML]{9AFF99}{\color[HTML]{000000} $2K$} & $3K$ \\ \hline
$b$               & \cellcolor[HTML]{9AFF99}$2K$  & ---                                                 & $3K$                                                & $3K$ \\ \hline
$c$               & $3K$ & \cellcolor[HTML]{9AFF99}{\color[HTML]{000000} $2K$} & ---                                                 & $3K$ \\ \hline
$p$               & \cellcolor[HTML]{9AFF99}$2K$ & \cellcolor[HTML]{9AFF99}$2K$                        & \cellcolor[HTML]{9AFF99}$2K$                        & ---                                                 \\ \hline
\end{tabular}
\end{minipage}\hfill
\caption{Every cell shows the number of voters who prefer the row candidate over the column candidate. The left table shows the case of the reduced election in \Cref{thm:wfrugalMaxmin}. The right table shows the case of the modified election in the forward direction of the proof of \Cref{thm:wfrugalMaxmin}. The green cells show the witness of worst-case pairwise elections for every row candidate.}
\label{tab:maximin}
\end{table}

 \begin{figure}[htbp]
 \begin{center}
 \begin{tikzpicture}[scale=2]
  \node[draw,circle] (a) at (2,1) {a};
  \node[draw,circle] (b) at (2,0) {b};
  \node[draw,circle] (c) at (0,0) {c};
  \node[draw,circle] (p) at (0,1) {p};
  
  \draw[->] (a) -- node[above] {K} (p);
  \draw[->] (b) -- node[above] {K} (p);
  \draw[->] (c) -- node[left] {K} (p);
  \draw[->] (a) -- node[below] {K} (c);
  \draw[->] (b) -- node[below] {K} (c);
  \draw[->] (a) -- node[right] {3K} (b);
 \end{tikzpicture}
 \end{center}
 \caption{Weighted majority graph of the reduced instance in \Cref{thm:wfrugalMaxmin}.}\label{fig:maximin_initial}
 \end{figure}
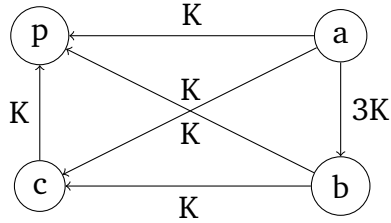

 In the forward direction, suppose there exists a $W^{\prime} \subset W$ such that $\sum_{w\in W^{\prime}} w = K$. We keep the votes corresponding to the weights in $W^{\prime}$ same as the original vote $p\succ a\succ b\succ c$. We change the rest of the votes in $T$ to $p\succ b\succ c\succ a$. We observe from the weighted majority graph (shown in \Cref{fig:maximin_forward}) induced by these new set of votes that the maximin score of every candidate is $-K$ and thus due to the tie-breaking rule, $p$ wins\longversion{ the election}.
 
 \begin{figure}[htbp]
 \begin{center}
 \begin{tikzpicture}[scale=2]
  \node[draw,circle] (a) at (2,1) {a};
  \node[draw,circle] (b) at (2,0) {b};
  \node[draw,circle] (c) at (0,0) {c};
  \node[draw,circle] (p) at (0,1) {p};
  
  \draw[->] (a) -- node[above] {K} (p);
  \draw[->] (b) -- node[above] {K} (p);
  \draw[->] (c) -- node[left] {K} (p);
  \draw[->] (c) -- node[below] {K} (a);
  \draw[->] (b) -- node[below] {K} (c);
  \draw[->] (a) -- node[right] {K} (b);
 \end{tikzpicture}
 \end{center}
 \caption{Weighted majority graph induced by the votes in the forward direction of the proof of \Cref{thm:wfrugalMaxmin}.}\label{fig:maximin_forward}
 \end{figure}
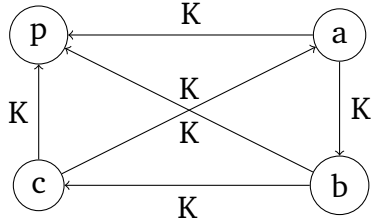

 To prove the result in the other direction, suppose there is a way to change the vulnerable votes, that is the votes in $T$, that makes $p$ win the election. Let the new set of votes that replace $T$ be $T^\pr$. Without loss of generality, we can assume that all the votes in $T^\pr$ place $p$ at the first position. Hence the maximin score of candidate $p$ is $-K$. We first notice that the only way $p$ could win is that the vertices $a, b,$ and $c$ must form a cycle in the weighted majority graph. Otherwise, one of $a, b,$ and $c$ will be\longversion{ a Condorcet winner and thus} the winner of the election. We also observe that the weight of every edge of the cycle consisting of $a, b,$ and $c$ only must be at least $K$. If not, then the maximin score of one of candidates in $\{a, b, c\}$ must be strictly more than $-K$. This contradicts our assumption that $p$ wins since the maximin score of candidate $p$ is fixed at $-K$. 
 
 Now, we claim that candidate $b$ must defeat candidate $c$. Suppose not. Then, since the maximin score of $p$ is fixed at $-K$, $c$ must defeat $b$ by a margin of at least $K$. Further, note that the margin must be exactly $K$ since making $c$ defeat $b$ by a margin $K$ requires $c$ to be preferred over $b$ in every vote in $T^\pr$. On the other hand, $a$ must defeat $c$ by a margin of at least $K$ (and thus exactly $K$), otherwise the maximin score of $c$ will be more than $-K$. This implies that all the votes in $T^\pr$ must be $p\succ a\succ c\succ b$ which makes $a$ defeat $b$. This is a contradiction since the vertices $a, b,$ and $c$ must form a cycle in the weighted majority graph. Hence $b$ must defeat $c$ by a margin of $K$. 
 
 Since $b$ defeats $c$ by a margin of $K$, every vote in $T^\pr$ is forced to prefer $b$ over $c$. Without loss of generality, we assume that all the votes in $T^\pr$ are either $p\succ a\succ b\succ c$ or $p\succ b\succ c\succ a$, since whenever $c$ is immediately after $a$, we can swap $a$ and $c$ and this will only reduce the score of $a$ without affecting the score of any other candidates. If the total weight of the votes $p\succ a\succ b\succ c$ in $T^\pr$ is more than $K$, then $D_E(c,a) < K$, thereby making the maximin score of $a$ more than the maximin score of $p$. If the total weight of the votes $p\succ a\succ b\succ c$ in $T^\pr$ is less than $K$, then $D_E(a,b) < K$, thereby making the maximin score of $b$ more than the maximin score of $p$. Thus the total weight of the votes $p\succ a\succ b\succ c$ in $T^\pr$ should be exactly $K$ which corresponds to a partition of $W$.
\end{proof}

 We now prove the hardness result for the STV voting rule.
 
\begin{theorem}\label{thm:stv_wt}
 The \textsc{Frugal-bribery} problem is \NPC{} for the STV voting rule for $3$ candidates.
\end{theorem}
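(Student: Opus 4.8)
The plan is to reduce from $\frac{1}{4}$-\textsc{Partition}, whose \NPCshort{}-ness is supplied by Lemma~\ref{lem:part}. Membership in \NPshort{} is immediate: a certificate is a choice of a new preference order for each vulnerable vote, and the STV winner under the stated tie-breaking rule is computable in polynomial time. For hardness, let $(W,4K)$ with $W=\{w_1,\dots,w_n\}$ and $\sum_i w_i = 4K$ be an instance; after multiplying all weights and $K$ by $2$ I may assume every $w_i$, and hence every sub-sum, is even and that $K$ is even. I would use exactly three candidates $\{p,a,b\}$ with $p$ distinguished. For each $w_i$ I introduce one \emph{vulnerable} vote of weight $w_i$, originally ranked $p\succ a\succ b$, so the total vulnerable weight is $4K$. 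The remaining \emph{non-vulnerable} votes form a fixed scaffold chosen to enforce the baseline plurality scores (from the scaffold alone, written $s_0(\cdot)$)
\[ s_0(p) = 0,\quad s_0(a) = 2K-1,\quad s_0(b) = 3K-1, \]
together with scaffold-only pairwise margins $D(p,a) > 4K$ and $D(b,p) > 4K$. Every scaffold vote ranks $b\succ p$ and is therefore non-vulnerable, while every $w_i$-vote ranks $p\succ b$. A direct check shows the original winner is $b$ (the scaffold makes $a$ the plurality-minimum, so $a$ is eliminated first and $b$ then beats $p$ pairwise), which certifies that the $w_i$-votes are exactly the vulnerable votes and the scaffold votes exactly the non-vulnerable ones.

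The design exploits the fact that three-candidate STV coincides with ``drop the plurality-minimum, then decide by the full pairwise contest between the two survivors''. Since the scaffold margins exceed the at most $4K$ swing that the vulnerable votes can contribute, in \emph{every} profile reachable by bribery $p$ beats $a$ and loses to $b$ pairwise. Hence $p$ can win if and only if $b$ (rather than $a$ or $p$) is eliminated in the first round. Writing $x_p,x_a,x_b$ for the total weight of vulnerable votes the briber ranks first at $p,a,b$ respectively (so $x_p+x_a+x_b = 4K$), the requirement that $b$ be the plurality-minimum becomes $s_0(b)+x_b \le x_p$ and $s_0(b)+x_b \le s_0(a)+x_a$. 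Summing these and substituting the baseline values yields $3x_b \le 1$, forcing $x_b = 0$; the two inequalities then read $x_p \ge 3K-1$ and $x_a \ge K$. Because every sub-sum is even while $3K-1$ is odd, the first sharpens to $x_p \ge 3K$, and with $x_p+x_a = 4K$ this pins down $x_a = K$ and $x_p = 3K$ exactly. Thus the ``$a$-first'' vulnerable votes form a sub-collection of $W$ of weight exactly $K$.

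Both directions then follow. Given $W'\subseteq W$ with $\sum_{w\in W'}w = K$, the briber re-ranks the votes of $W'$ first at $a$ and the rest first at $p$; then $p$ scores $3K$ while $a$ and $b$ each score $3K-1$, so with the tie-breaking rule eliminating $b$ first, candidate $b$ is dropped and $p$ beats the survivor $a$, giving a \YES{} instance. Conversely, any successful bribery must, by the computation above, place weight exactly $K$ of vulnerable votes first at $a$, exhibiting a subset of weight $K$. I expect the main obstacle to be the simultaneous engineering of the scaffold with polynomially many weighted votes: it must realize the three prescribed plurality offsets (chosen as $0,\,2K-1,\,3K-1$ precisely so that the two first-round inequalities pinch to the single equation $x_a = K$), make the two relevant pairwise margins exceed the $4K$ bribery swing so the round-two outcomes are independent of the briber's choices, and still keep $b$ above $p$ in every scaffold vote so that the vulnerability structure and the identity of the original winner come out as intended. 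Note in particular that $s_0(p)=0$ is forced, since any vote ranking $p$ first would rank $p$ above $b$ and hence be vulnerable, so all of $p$'s first-place support must come from the controllable votes.
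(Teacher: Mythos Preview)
Your plan contains a genuine gap: the scaffold you describe cannot exist. Because the original winner is to be $b$, every non-vulnerable (scaffold) vote must rank $b\succ p$; together with $s_0(p)=0$ this leaves only the three orderings $b\succ p\succ a$, $b\succ a\succ p$, $a\succ b\succ p$, of respective total weights $W_1,W_2,W_3$. Your plurality offsets force $W_3=s_0(a)=2K-1$ and $W_1+W_2=s_0(b)=3K-1$, whence the scaffold margin is
\[
D(p,a)\;=\;W_1-W_2-W_3\;\le\;(3K-1)-0-(2K-1)\;=\;K,
\]
far short of the $4K$ you stipulate. So the step ``in every profile reachable by bribery $p$ beats $a$ pairwise'' cannot be justified, and with it the clean reduction of the problem to the first-round inequalities collapses. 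You correctly flagged the scaffold as the main obstacle, but it is not merely delicate --- with your chosen offsets and winner it is outright infeasible.

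The paper sidesteps this by making $a$, not $b$, the original winner. Then non-vulnerability means $a\succ p$, and the scaffold can include the vote $a\succ p\succ b$; this single ordering simultaneously supplies $a$'s plurality weight and gives $p$ a head start over $b$ for round two. Concretely, the paper uses just two scaffold votes, $a\succ p\succ b$ of weight $3K-1$ and $b\succ a\succ p$ of weight $2K$, with tie-breaking $a\succ b\succ p$. The briber's move is then the mirror of yours: shift some vulnerable weight to $b$-first so that $a$ is eliminated, after which $p$ faces $b$ and wins. The first-round inequalities, with the adversarial tie-breaking, pin the $b$-first weight to exactly $K$, yielding the $\frac{1}{4}$-partition. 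Your approach can be repaired along similar lines: if you drop the unattainable demand $D(p,a)>4K$ and instead fix $W_1=3K-1$, $W_2=0$, then the scaffold gives $D(p,a)=K$, and under the already-forced values $x_a=K$, $x_p=3K$ the total margin in round two is $3K>0$; but then you must argue the backward direction without decoupling the two rounds, which changes the structure of the proof.
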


\begin{proof}
 The problem is clearly in \NPshort{}. We reduce an arbitrary instance of $\frac{1}{4}$-\textsc{Partition} to an instance of \textsc{Frugal-bribery} for the STV voting rule. Let $(W,4K),$ with $W=\{w_1, \ldots, w_n\}$ and $\sum_{i=1}^n w_i = 4K$, be an arbitrary instance of the $\frac{1}{4}$-\textsc{Partition} problem. The candidates in our \textsc{Frugal-bribery} instance are $p, a,$ and $b$. For every $i\in [n]$, we have a vote $p\succ a\succ b$ of weight $w_i$. We have one vote $a\succ p\succ b$ of weight $3K-1$ and one vote $b\succ a\succ p$ of weight $2K$. This finishes the description of the votes. The tie-breaking rule is ``$a\succ b\succ p$.'' We observe that the plurality score of candidates $a, b,$ and $p$ are $3K-1, 2K,$ and $4K$ respectively in the resulting election. Hence candidate $b$ gets eliminated in the first round. In the second round, the plurality score of candidates $a$ and $p$ are $5K-1$ and $4K$ respectively. Hence candidate $a$ wins the STV election. The distinguished candidate is $p$. Let $T$ denote the set of votes corresponding to the weights in $W$ and the rest of the votes be $S$. Notice that only the votes in $T$ are vulnerable. We claim that the two instances are equivalent.
 
 In the forward direction, suppose there exists a $W^{\prime} \subset W$ such that $\sum_{w\in W^{\prime}} w = K$. We change the votes corresponding to the  weights in $W^{\prime}$ to $b\succ p\succ a$. We do not change the rest of the votes in $T$. In the first round of the resulting profile, candidates $a, b,$ and $p$ receive a plurality score of $3K-1, 3K,$ and $3K$ respectively. Hence candidate $a$ gets eliminated in the first round. In the second round, candidates $b$ and $p$ receive a plurality score of $3K$ and $6K-1$ respectively and thus candidate $p$ wins the election.
 
%  In the resulting profile, candidate $a$ gets eliminated in the first round and consecutively candidate $p$ wins the election in the second round.
 
For the other direction, suppose there is a way to change the votes in $T$ that makes $p$ win the election. We first observe that candidate $p$ can win only if $p$ and $b$ qualifies for the second round. Hence, the total weight of the votes in $T$ that put $b$ at the first position must be at least $K$. On the other hand, if the total weight of the votes in $T$ that put $b$ at the first position is strictly more than $K$, then $p$ does not qualify for the second round and thus cannot win the election. Hence the total weight of the votes in $T$ that put $b$ at the first position must be exactly equal to $K$ which constitutes a $\frac{1}{4}$-partition of $W$.
\end{proof}

 For three candidates, the STV voting rule is the same as the plurality with runoff voting rule. Hence, we have the following corollary.
 
\begin{corollary}\label{cor:run_wt}
 The \textsc{Frugal-bribery} problem is \NPC{} for the plurality with runoff voting rule for $3$ candidates.
\end{corollary}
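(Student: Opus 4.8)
The plan is to derive this directly from Theorem~\ref{thm:stv_wt}, by observing that on three candidates the STV rule and the plurality-with-runoff rule are literally the same social choice function; the reduction already built for STV then transfers verbatim.

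First I would argue the equivalence of the two rules when $|\mathcal{C}|=3$. In STV with three candidates, $m-1=2$ rounds are played: the first round eliminates the candidate with the smallest plurality score and redistributes its ballots to the next-ranked surviving candidate; the second round then has only two candidates, where the remaining plurality score is exactly the pairwise (majority) tally between them, so the survivor is the pairwise winner of the two candidates that were \emph{not} eliminated. In plurality with runoff, one retains the two candidates with the highest plurality scores and declares their pairwise winner the overall winner. Since eliminating the lowest plurality scorer is the same as keeping the top two, and the final decision is a pairwise comparison in both cases (with the same lexicographic tie-breaking applied identically), the two rules agree on every three-candidate profile.

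Given this equivalence, membership in \NPshort{} is immediate, and for \NPshort{}-hardness I would simply invoke the construction in the proof of Theorem~\ref{thm:stv_wt}: the reduction from $\frac{1}{4}$-\textsc{Partition}, the candidate set $\{p,a,b\}$, the vote weights, the partition of ballots into the vulnerable set $T$ and the non-vulnerable set $S$, the tie-breaking rule ``$a\succ b\succ p$'', and the entire forward and backward correctness analysis all carry over unchanged, because the winner computed by plurality with runoff on each of these profiles coincides with the STV winner already analysed.

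The only point requiring (minor) care is consistency of tie-breaking: the notion of vulnerable votes presupposes a unique winner, so one must check that the candidate eliminated/retained and the final pairwise decision use the same tie-breaking convention under both rules. Since the equivalence holds as identical functions — including on profiles with tied plurality scores, where the fixed order ``$a\succ b\succ p$'' resolves ties the same way for both procedures — this is not a genuine obstacle, and the corollary follows at once.
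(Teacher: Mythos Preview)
Your proposal is correct and follows essentially the same approach as the paper: the paper simply remarks that for three candidates STV and plurality with runoff coincide, and states the corollary as an immediate consequence of Theorem~\ref{thm:stv_wt}. Your write-up is more explicit about why the two rules agree and about tie-breaking, but the underlying argument is identical.
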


\longversion{
We turn our attention to the Copeland$^{\alpha}$ voting rule next.

\begin{theorem}\label{thm:wfrugalCopeland}
 The \textsc{Frugal-bribery} problem is \NPC{} for the Copeland$^{\alpha}$ voting rule for $4$ candidates for every $\alpha\in[0,1)$.
\end{theorem}

\begin{proof}
 The problem is clearly in \NPshort{}. We reduce an arbitrary instance of \textsc{Partition} to an instance of \textsc{Frugal-bribery} for the Copeland$^{\alpha}$ voting rule. Let $(W,2K),$ with $W=\{w_1, \ldots, w_n\}$ and $\sum_{i=1}^n w_i = 2K$, be an arbitrary instance of the \textsc{Partition} problem. The candidates in our \textsc{Frugal-bribery} instance are $p, a, b,$ and $c$. For every $i\in [n]$, we have a vote $p\succ a\succ b\succ c$ of weight $w_i$. There are two votes $a\succ p\succ b\succ c$ and $c\succ b\succ a\succ p$ each of weight $K+1$. This finishes the description of the votes.  The tie-breaking rule is ``$a\succ b\succ c\succ p$.'' The weighted majority graph induced by these votes are shown in \Cref{fig:copeland_initial}. We observe that candidate $a$ wins since it is the Condorcet winner of the election. The distinguished candidate is $p$. Let $T$ denote the set of votes corresponding to the weights in $W$ and the rest of the votes be $S$. Notice that only the votes in $T$ are vulnerable. We claim that the two instances are equivalent.

\begin{table}[H]
\centering
\vspace{10pt}
\begin{minipage}{.5\textwidth}
\resizebox{\textwidth}{!}{
\begin{tabular}{|c|c|c|c|c|}
\hline
\makecell{Pairwise\\Outcomes} & $a$                          & $b$                                                 & $c$                                                 & $p$                                                 \\ \hline
$a$               & ---                          & \cellcolor[HTML]{9AFF99}$3K+1$                                                & \cellcolor[HTML]{9AFF99}$3K+1$ & \cellcolor[HTML]{9AFF99}$2K+2$ \\ \hline
$b$               & $K+1$  & ---                                                 & \cellcolor[HTML]{9AFF99}$3K+1$                                                & $K+1$ \\ \hline
$c$               & $K+1$ & $K+1$ & ---                                                 & $K+1$ \\ \hline
$p$               & $2K$ & \cellcolor[HTML]{9AFF99}$3K+1$                        & \cellcolor[HTML]{9AFF99}$3K+1$                        & ---                                                 \\ \hline
\end{tabular}
}
\end{minipage}\hfill
\begin{minipage}{.5\textwidth}
\resizebox{\textwidth}{!}{
\begin{tabular}{|c|c|c|c|c|}
\hline
\makecell{Pairwise\\Outcomes} & $a$                          & $b$                                                 & $c$                                                 & $p$                                                 \\ \hline
$a$               & ---                          & $K+1$                                                & $K+1$ & \cellcolor[HTML]{9AFF99}$2K+2$ \\ \hline
$b$               & \cellcolor[HTML]{9AFF99}$3K+1$  & ---                                                 & \cellcolor{arylideyellow}$2K+1$                                                & $K+1$ \\ \hline
$c$               & \cellcolor[HTML]{9AFF99}$3K+1$ & \cellcolor{arylideyellow}$2K+1$ & ---                                                 & $K+1$ \\ \hline
$p$               & $2K$ & \cellcolor[HTML]{9AFF99}$3K+1$                        & \cellcolor[HTML]{9AFF99}$3K+1$                        & ---                                                 \\ \hline
\end{tabular}
}
\end{minipage}\hfill
\caption{Every cell shows the number of voters who prefer the row candidate over the column candidate. The left table shows the case of the reduced election in \Cref{thm:wfrugalCopeland}. The right table shows the case of the modified election in the forward direction of the proof of \Cref{thm:wfrugalCopeland}. Green cells indicate that the row candidate defeats the column candidate in pairwise election. Yellow cells indicate that the row and the column candidate are tied in pairwise election.}
\label{tab:copeland}
\end{table} 
 
 \begin{figure}[!htbp]
 \begin{center}
 \begin{tikzpicture}[scale=2]
  \node[draw,circle] (a) at (2,1) {a};
  \node[draw,circle] (b) at (2,0) {b};
  \node[draw,circle] (c) at (0,0) {c};
  \node[draw,circle] (p) at (0,1) {p};
  
  \draw[->] (a) -- node[above] {2} (p);
  \draw[->] (p) -- node[above] {2K} (b);
  \draw[->] (p) -- node[left] {2K} (c);
  \draw[->] (a) -- node[below] {2K} (c);
  \draw[->] (b) -- node[below] {2K} (c);
  \draw[->] (a) -- node[right] {2K} (b);
 \end{tikzpicture}
 \end{center}
 \caption{Weighted majority graph of the reduced instance in \Cref{thm:wfrugalCopeland}.}\label{fig:copeland_initial}
 \end{figure}
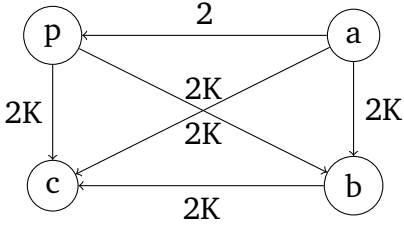
 
 In the forward direction, suppose there exists a $W^{\prime} \subset W$ such that $\sum_{w\in W^{\prime}} w = K$. We change the votes corresponding to the weights in $W^{\prime}$ to $p\succ c\succ b\succ a$. We change the rest of the votes in $T$ to $p\succ b\succ c\succ a$. We observe from the weighted majority graph (shown in \Cref{fig:copeland_forward}) induced by these new set of votes that the Copeland$^{\alpha}$ score of of candidate $p$ is $2$ and the Copeland$^{\alpha}$ score of every other candidate is strictly less than $2$. Hence, candidate $p$ wins the election.
 
 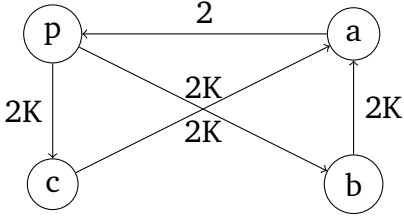
\begin{figure}[htbp]
 \begin{center}
 \begin{tikzpicture}[scale=2]
  \node[draw,circle] (a) at (2,1) {a};
  \node[draw,circle] (b) at (2,0) {b};
  \node[draw,circle] (c) at (0,0) {c};
  \node[draw,circle] (p) at (0,1) {p};
  
  \draw[->] (a) -- node[above] {2} (p);
  \draw[->] (p) -- node[above] {2K} (b);
  \draw[->] (p) -- node[left] {2K} (c);
  \draw[->] (c) -- node[below] {2K} (a);
%   \draw[->] (b) -- node[below] {2K} (c);
  \draw[->] (b) -- node[right] {2K} (a);
 \end{tikzpicture}
 \end{center}
 \caption{Weighted majority graph induced by the votes in the forward direction of the proof of \Cref{thm:wfrugalCopeland}.}\label{fig:copeland_forward}
 \end{figure}
 
 For the other direction, suppose there is a way to change the votes in $T$ that makes $p$ win the election. Let the new set of votes that replace $T$ be $T^\pr$. Without loss of generality, we can assume that all the votes in $T^\pr$ place $p$ at the top position. We claim that one of the three pairwise elections among $a, b,$ and $c$ must be a tie. Suppose not, then $a$ must lose to both $b$ and $c$, otherwise $a$ wins the election due to the tie-breaking rule. Now consider the pairwise election between $b$ and $c$. If $b$ defeats $c$, then $b$ wins the election due to the tie-breaking rule. If $c$ defeats $b$, then $c$ wins the election again due to the tie-breaking rule. Hence, one of the pairwise elections among $a, b,$ and $c$ must be a tie. Without loss of generality suppose $a$ and $b$ ties. However, then the total weight of the votes that prefer $a$ to $b$ in $T^\pr$ must be $K$ which constitutes a partition of $W$.
\end{proof}}

\longversion{
Finally, we show that the \textsc{Frugal-bribery} problem for the simplified Bucklin voting rule is \NPC{}.

\begin{theorem}\label{thm:bucklin_wt}
 The \textsc{Frugal-bribery} problem is \NPC{} for the simplified Bucklin voting rule for $4$ candidates.
\end{theorem}

\begin{proof}
 The problem is clearly in \NPshort{}. We reduce an arbitrary instance of \textsc{Partition} to an instance of \textsc{Frugal-bribery} for the simplified Bucklin voting rule. Let $(W,2K),$ with $W=\{w_1, \ldots, w_n\}$ and $\sum_{i=1}^n w_i = 2K$, be an arbitrary instance of the \textsc{Partition} problem. The candidates in our \textsc{Frugal-bribery} instance are $p, a, b,$ and $c$. For every $i\in [n]$, we have one vote $p\succ a\succ b\succ c$ of weight $w_i$. There are two votes $a\succ b\succ p\succ c$ and $c\succ b\succ a\succ p$ each of weight $K$. This finishes the description of the votes. The tie-breaking rule is ``$p\succ a\succ b\succ c$.'' Observe that:

\begin{itemize}
\item The candidates $a$ and $b$ get majority within the first two positions	.
\item The candidate $p$ does \textit{not} get majority within the first two positions.
\item No candidate gets majority within the first position.
\end{itemize}

Therefore, candidate $a$ wins due to the tie-breaking rule. We set the distinguished candidate as $p$. Let $T$ denote the set of votes corresponding to the weights in $W$ and the rest of the votes be $S$. Notice that only the votes in $T$ are vulnerable. We claim that the two instances are equivalent.
 
 In the forward direction, suppose there exists a $W^{\prime} \subset W$ such that $\sum_{w\in W^{\prime}} w = K$. We change the votes corresponding to the weights in $W^{\prime}$ to $p\succ c\succ b\succ a$. We keep the votes corresponding to the weights in $W\setminus W^{\prime}$ same as the original ones. Now no candidate gets majority within first two positions and candidate $p$ gets majority within first two positions. This makes $p$ win the election with a simplified Bucklin score of $3$ due to the tie-breaking rule.
 
% to $p\succ a\succ b\succ c$. We change the rest of the votes in $T$ 
 
To prove the result in the other direction, suppose there is a way to change the votes in $T$ that makes $p$ win the election. Let the new set of votes that replace $T$ be $T^\pr$. Without loss of generality, we can assume that all the votes in $T^\pr$ place $p$ at the first position. We first notice that the simplified Bucklin score of $p$ is already fixed at three. In the votes in $T^\pr$, candidate $b$ can never be placed at the second position since that will make the simplified Bucklin score of $b$ to be two. Also the total weight of the votes in $T^\pr$ that place $a$ in their second position can be at most $K$. The same holds for $c$. Hence, the total weight of the votes that place $a$ in their second position will be exactly equal to $K$ which constitutes a partition of $W$.
\end{proof}}

\shortversion{ We also have the following results for the Copeland$^\alpha$ and simplified Bucklin voting rules by reducing from \textsc{Partition}.
\begin{theorem}\shortversion{[$\star$]}\label{thm:wfrugalCopeland}
 The \textsc{Frugal-bribery} problem is \NPC{} for the Copeland$^{\alpha}$ and simplified Bucklin voting rules for four candidates, whenever $\alpha\in[0,1)$.
\end{theorem}
}

 From \Cref{prop:conn}, \Cref{\longversion{lem:wfrugalScr}\shortversion{lem:wfrugal}}, \Cref{thm:wfrugalPluNPC,thm:wfrugalMaxmin,thm:wfrugalCopeland,thm:stv_wt\longversion{,thm:bucklin_wt}}, 
and \Cref{cor:run_wt}, we get the following.

\begin{corollary}
 When we have $3$ candidates, the \textsc{Uniform-frugal-\$bribery} and the \textsc{Nonuniform-frugal-\$bribery} problems are \NPC{} for the scoring rules except 
 plurality, STV, and the plurality with runoff voting rules. When we have $4$ candidates, the \textsc{Uniform-frugal-\$bribery} and the \textsc{Nonuniform-frugal-\$bribery} problems are \NPC{} for the maximin, Copeland, and simplified Bucklin voting rules.
\end{corollary}
\section{Conclusion and Future Work}\label{sec:con}

We have proposed and studied two important special cases of the \textsc{\$Bribery} problem where the briber is frugal. We have shown that the {\sc Frugal-bribery} problem can sometimes be polynomial time solvable even if the {\sc \$Bribery} and the {\sc Swap-bribery} problems are \NPC as observed for the $k$-approval and the $k$-veto voting rules for unweighted elections. This establishes success in finding important practical special cases of the sophisticated {\sc \$Bribery} and {\sc Swap-bribery} problems. We also proved that the {\sc Frugal-bribery} problem is \NPC for the Borda voting rule and the {\sc Frugal-\$bribery} problem is \NPC for all the voting rules studied here except the plurality and the veto voting rules for unweighted elections. The intractability results of the {\sc Frugal-\$bribery} problem and the {\sc Frugal-\$bribery} problem thereby subsumes and strengthens the hardness results for the {\sc \$Bribery} problem. For the weighted election, we have shown that the simplest {\sc Frugal-bribery} problem also is \NPC for all the voting rule studied in this paper except for the plurality voting rule even when the number of candidates is as small as $3$ or $4$. We find these results in the weighted elections both surprising and interesting.

% Our results show that even for these special cases, the bribery problem continues to be intractable, thus subsuming known hardness results in the literature. Our results reinforce that bribery is a rather hard computational problem, because of the hardness of several important special cases. 
% This also strengthens the view that bribery, although a possible attack on an election in principle, may be infeasible in practice. 

An immediate future work is to resolve the open cases in \Cref{tbl:frugal_summary}. Another important direction for future work is to study these problems under various other settings. Notably, one might consider enhancing our proposed model further to account for constraints that arise in practical scenarios. For instance, we might want to restrict the campaigner's knowledge about the votes and/or the candidates who will actually turn up. The uncertainty can also arise from the voting rule that will eventually be used among a set of voting rules. Also, studying these problems when the pricing model for vulnerable votes is similar to swap bribery would be another interesting future direction. We believe that a game theoretic perspective of the problem may also yield valuable insights. 
% However, it is not very clear whether a swap bribery like pricing model has any practical motivation in the setting of bribery without money.

\subsubsection*{Acknowledgement} Palash Dey wishes to gratefully acknowledge support from Google India for providing him with a special fellowship for carrying out his doctoral work. Neeldhara Misra acknowledges support by the INSPIRE Faculty Scheme, DST India (project IFA12-ENG-31).

\section*{References}
\longversion{\bibliographystyle{alpha}
\bibliography{frugal}}

\end{document}